%% file: main.tex
\documentclass{article}

\usepackage{amsmath, amsthm, amssymb}
\usepackage{mathtools}
\usepackage{graphicx}
\usepackage{subcaption}

\usepackage{array}
\usepackage{tabularx}
\usepackage{makecell}
\usepackage{multicol}
\usepackage{booktabs}

\usepackage{hyperref}

\usepackage{xcolor}
\usepackage{xparse}

\newtheorem{assumption}{Assumption}
\newtheorem{theorem}{Theorem}
\newtheorem{remark}{Remark}

\newtheorem{proposition}{Proposition}
\newtheorem{definition}{Definition}

\newtheorem{lemma}{Lemma}

\newtheorem{question}{Question}

\usepackage{tcolorbox}
\usepackage{xcolor}

\usepackage[preprint]{icml2026}

\newcommand{\bad}[1]{{\color{failred} #1}}
\newcommand{\weak}[1]{{\color{weakorange} #1}}

\definecolor{summarygray}{RGB}{245,245,245}
\definecolor{summaryborder}{RGB}{80,80,80}
\definecolor{failred}{RGB}{180,30,30}
\definecolor{weakorange}{RGB}{200,120,20}

\newtcolorbox{summarybox}{
  colback=summarygray,
  colframe=summaryborder,
  boxrule=0.6pt,
  arc=3pt,
  left=6pt,
  right=6pt,
  top=6pt,
  bottom=6pt,
  width=\linewidth,
  before skip=8pt,
  after skip=10pt
}

\begin{document}

\icmltitlerunning{Why are Entropy Dynamics and Reasoning Correlated in LLMs?}

\twocolumn[
\icmltitle{The Stepwise Informativeness Assumption: \\ Why are Entropy Dynamics and Reasoning Correlated in LLMs?}

\begin{icmlauthorlist}
\icmlauthor{Mar Gonzàlez I Català}{cam}
\icmlauthor{Haitz Sáez de Ocáriz Borde}{cam,oxf}
\icmlauthor{George D. Monta\~nez}{cam,hmc}
\icmlauthor{Pietro Liò}{cam}
\end{icmlauthorlist}

\icmlaffiliation{cam}{University of Cambridge}
\icmlaffiliation{oxf}{University of Oxford}
\icmlaffiliation{hmc}{Harvey Mudd College}

\icmlcorrespondingauthor{Mar Gonzàlez I Català}{mg2211@cam.ac.uk}

\icmlkeywords{Reasoning, Uncertainty, Entropy, Information Theory}

\vskip 0.3in
]
\printAffiliationsAndNotice{}

\begin{abstract}
Recent work uses entropy-based signals at multiple representation levels to study reasoning in large language models, but the field remains largely empirical. A central unresolved puzzle is why internal entropy dynamics, defined under the predictive distribution of a model, correlate so robustly with external correctness given by the ground-truth answer. In this paper, we argue that this correlation arises because autoregressive models reason correctly when they accumulate information about the true answer via answer-informative prefixes. We formalize this intuition via the Stepwise Informativeness Assumption (SIA), which states that reasoning prefixes accumulate answer-relevant information in expectation as generation progresses. We show that SIA naturally emerges from maximum-likelihood optimization on human reasoning traces and is reinforced by standard fine-tuning and reinforcement-learning pipelines. We then derive observable signatures of SIA linking conditional answer entropy dynamics to correctness. We empirically test SIA across multiple reasoning benchmarks (GSM8K, ARC, SVAMP) and a diverse set of open-weight LLMs (Gemma-2, LLaMA-3.2, Qwen-2.5, DeepSeek and Olmo variants), showing that training induces it and that correct traces exhibit characteristic conditional answer entropy patterns.
\end{abstract}

\input{Sec1_introduction}
\input{Sec2_notation}
\input{Sec3_entropy-correctness_puzzle}

\input{Sec4_stepwise_informativeness_assumption}
\input{Sec5_empirical_validation}
\input{Sec6_future_work}

\bibliographystyle{icml2026}
\bibliography{references}

\input{AppA_experimental_setup}
\input{AppB_further_results}
\input{AppC_proofs}

\end{document}

%% file: Sec1_introduction.tex
\vspace{-20pt}
\section{Introduction}

A growing body of empirical studies analyzes model-internal entropy dynamics and consistently reports strong correlations between characteristic patterns and reasoning quality in large language models~(LLMs). These signals have been successfully used to improve reasoning performance \cite{agarwal2025unreasonableeffectivenessentropyminimization, li2025compressingchainofthoughtllmsstep, ton2025understandingchainofthoughtllmsinformation}, guide exploration and early stopping \cite{zhang2025entropybasedexplorationconductionmultistep, sharma2025thinkjustenoughsequencelevel}, identify critical decision points \cite{ali2026entropylensuncoveringdecisionstrategies, wang20258020rulehighentropyminority, qian2025demystifyingreasoningdynamicsmutual}, and detect failures such as hallucinations or overthinking \cite{farquhar_detecting_2024}. However, despite the empirical success of entropy-based approaches to reasoning, a central unresolved question remains:

\begin{question}\label{q:entropy}
Why do internal entropy dynamics—defined purely with respect to a model’s predictive
distribution—correlate so robustly with external correctness, which is defined only
relative to the ground-truth answer?
\end{question}

In this paper, we propose an explanation for this phenomenon. We argue that the observed entropy–correctness correlation arises if autoregressive models learn, through training, to accumulate information about the true answer via answer-informative prefixes, a pattern inherited from human reasoning traces and reinforced by fine-tuning and reinforcement-learning pipelines. We formalize this hypothesis via the \emph{Stepwise Informativeness Assumption} (SIA), a minimal information-theoretic condition stating that reasoning prefixes accumulate information about the true answer in expectation. Under SIA, conditional answer entropy can be interpreted as a progress variable for reasoning: it tracks cumulative answer-relevant information and decreases along successful reasoning chains. Crucially, our framework predicts that characteristic signatures of this descent indicate whether reasoning converges reliably to the correct answer. This provides a structural explanation for why entropy-based signals, despite being internal quantities, can become predictive of reasoning quality.

Finally, we empirically validate the framework across pretrained, supervised fine-tuned, and reinforcement-learning–trained models. We show that (i) training for reasoning induces SIA, and (ii) when SIA holds, it leaves clear traces in entropy dynamics, making conditional answer entropy an informative progress variable.

%% file: Sec2_notation.tex
\section{Preliminaries and Notation}
\label{sec:preliminaries}
We now provide standard definitions of language factorization, LLM training stages, and information theory, on which our results are based.
\subsection{Next-token prediction and likelihood factorization}

Modern language models are trained under the next-token prediction paradigm. 

\begin{definition}[Next-token prediction and autoregressive factorization]
Given an input prefix $X_{1:k}$, a language model with parameters $\theta$ defines a conditional distribution over the next token $p_\theta(X_{k+1}\mid X_{1:k}),$ and the likelihood of a full sequence factorizes autoregressively as $p_\theta(X_{1:K})=\prod_{k=1}^{K} p_\theta(X_k \mid X_{<k})$, where $X_{<k} \coloneqq X_{1:k-1}$.
\end{definition}

\begin{definition}[Autoregressive language model training objective]
Let the training corpus be a collection of $N$ token sequences of variable length $K_i$, $\mathcal{D}=\{X_{1:K_i}^{(i)}\}_{i=1}^N$. The maximum-likelihood training objective for a language model with parameters $\theta$ is defined as $\theta^\ast= \arg \max_\theta \sum^N_{i=1}\log p_\theta(X_{1:K_i}^{(i)})$, which expands autoregressively as a sum over token log-likelihoods.
\end{definition}

In practice, this objective is implemented  by minimizing the cross-entropy loss $\mathcal{L}_{\text{CE}}=-\sum^N_{i=1}\sum^{K_i}_{k=1}\log p_\theta(X_k^{(i)} \mid X_{<k}^{(i)}).$ This encourages the model to make each future token as predictable as possible given the past. Later sections of this paper analyze how this pressure towards subsequent predictability affects reasoning processes, correctness, and entropy minimization.

\subsection{Difference between true answer, chain-of-thought, and model predictive distribution}
\label{subsec:preliminaries-distributions}

The following definitions are key to understanding the difference between the internal model dynamics and the ground-truth distribution referenced in Question~\ref{q:entropy}.

\begin{definition}[True answer distribution] Let $Q \in \mathcal{Q}$ denote a query and $A\in \mathcal{A}$ its correct answer. The ground-truth joint distribution over queries and answers is $(Q,A)\sim p^\star(Q,A)$, and the corresponding true posterior over answers given a query is $p^\star(A \mid Q)$. All statements about correctness are defined with respect to this true conditional distribution $p^\star(A \mid Q)$.
\label{def:trueanswer}
\end{definition}

\begin{definition}[Chain-of-thought (data-generating) distribution] In many reasoning datasets, each query $Q$ is paired with a correct answer $A$ and a human-written chain-of-thought trace $C_{1:K}$. We denote the empirical joint distribution over this triple as $r(Q,C_{1:K},A) = p^\star(Q,A)\, r(C_{1:K}, A\mid Q)$,
where $p^\star(Q,A)$ is the ground-truth question--answer distribution and  $r(C_{1:K}, A\mid Q)$ describes how human annotators produce chain-of-thought traces when solving the problem.
\label{def:datadistribution}
\end{definition}

\begin{definition}[Model predictive distribution] Given a query $Q$, a reasoning model with parameters $\theta$ generates a sequence of intermediate tokens $C_{1:K} = (C_1, \dots, C_K)$ and an answer sequence $A= (A_1,\dots,A_T)$. The model induces an autoregressive distribution over full reasoning traces, $p_\theta(C_{1:K} \mid Q) = \prod_{k=1}^K p_\theta(C_k \mid Q, C_{<k})$, and, conditioned on a reasoning trace, an autoregressive distribution over answers, $p_\theta(A \mid Q, C_{1:K})=\prod^T_{t=1} p_\theta(A_t \mid Q, C_{1:K}, A_{<t})$.
\label{def:modelpredictive}
\end{definition}

Note that we abuse notation by using $A$ to denote both the model-generated and ground-truth answer. The intended meaning will be clear from the underlying distribution.

When defining stepwise entropy and information-gain quantities, we will also condition on partial prefixes $C_{1:k}$ (for $k<K$), which yields $p_\theta(A\mid Q,C_{1:k})$ by the same factorization. Importantly, note that token-level entropies and conditional answer entropies are purely internal properties of the model's internal predictive distribution $p_\theta$. These entropies are in principle independent of the true external answer distribution $p^\star(A\mid Q)$.

\subsection{Training stages of language models}

InstructGPT \cite{ouyang2022traininglanguagemodelsfollow} formalized a three-stage training pipeline that has since become standard in modern language models.

\paragraph{Pretraining on raw data.} In the pretraining stage, the model is trained via maximum-likelihood estimation on large-scale text corpora using the next-token prediction objective previously described. This implicitly includes a wide variety of reasoning traces such as explanations, derivations, proofs, and step-by-step problem solutions. Although correctness is not explicitly optimized at this stage, the model is rewarded for generating continuations that make future tokens predictable given the past, thereby learning sequential structures that progressively constrain plausible outcomes.

\paragraph{Supervised fine-tuning on labeled chain-of-thought triples.} In supervised fine-tuning (SFT), the model is trained on datasets consisting of explicit triples $(Q, C_{1:K}, A)$, where $C_{1:K}$ is a human-written chain-of-thought leading to the correct answer $A$. The same maximum-likelihood objective is applied, but now correctness is directly reflected in the data distribution: reasoning traces that make the correct answer highly probable receive higher likelihood. As a result, the model is explicitly encouraged to generate intermediate steps that reduce uncertainty about the true answer.

\paragraph{Post-training with reinforcement learning.} Finally, it is common to apply reinforcement learning–based post-training methods such as PPO~\citep{schulman2017proximal}, GRPO~\citep{shao2024deepseekmath}, or RL with verifiable rewards~(RLVR)~\citep{wen2025reinforcementlearningverifiablerewards} to elicit reasoning in LLMs. These methods reweight or refine the model’s generation policy based on outcome-level or process-level reward signals, further reinforcing reasoning trajectories that lead to correct answers and penalizing those that do not. This stage strengthens the alignment between internal uncertainty reduction and external correctness, but does not introduce new reasoning primitives; rather, it reshapes the probability mass over existing reasoning patterns learned during pretraining and SFT.

\subsection{Information-Theoretic Preliminaries}

When an LLM reasons step by step, each intermediate token can raise or lower its confidence in the correct answer. Information theory provides a principled framework to quantify these changes, formalizing uncertainty and information gain in probabilistic systems.

Next, we summarize the information-theoretic measures used throughout the paper. All random variables are assumed to be discrete. Also, before introducing the definitions, we clarify our notation: uppercase letters (e.g., $A$, $Q$, $C_k$) denote random variables; lowercase letters (e.g., $a$, $q$, $c_k$) denote particular realizations or sampled values of those variables; calligraphic letters (e.g., $\mathcal{A}$, $\mathcal{Q}$, $\mathcal{C}$) denote the set of all possible values each random variable can take. Unless otherwise stated, all logarithms are natural logarithms.

\begin{definition}[Entropy]
    Let $X$ be a discrete random variable taking values in $\mathcal{X}$, with probability mass function $p(x)=\Pr[X=x]$. The entropy (average or expected surprisal) of $X$ is defined as $H(X):=-\sum_{x\in\mathcal{X}} p(x)\log p(x).$
\end{definition}

\begin{definition}[Conditional entropy]
    Let $X$ and $Y$ be discrete random variables taking values in $\mathcal{X}$ and $\mathcal{Y}$, with joint pmf $p(x,y)=\Pr[X=x,Y=y]$, and marginal pmfs $p(x)=\Pr[X=x]$, $p(y)=\Pr[Y=y]$. The \emph{conditional entropy} of $Y$ given $X$ is $H(Y\mid X):=-\sum_{x\in\mathcal{X},y\in\mathcal{Y}}p(x,y)\log \left(\frac{p(x,y)}{p(x)}\right),$
    with the convention that $0\log0=0$.
\end{definition}

\begin{definition}[Mutual Information]
    Let $X$ and $Y$ be discrete random variables taking values in $\mathcal{X}$ and $\mathcal{Y}$, with joint pmf $p(x,y)=\Pr[X=x,Y=y]$, and marginal pmfs $p(x)=\Pr[X=x]$, $p(y)=\Pr[Y=y]$. The \emph{mutual information between $X$ and $Y$} is defined as $I(X;Y):=\sum_{x\in\mathcal{X}, y\in \mathcal{Y}}p(x,y)\log\left(\frac{p(x,y)}{p(x)p(y)}\right).$
\end{definition}

Note that all these definitions rely on logarithms. While the use of logarithms is not mandated, they uniquely satisfy a few intuitive properties: information from independent events adds, rarer events carry more information, and small changes in probability produce small changes in information.

%% file: Sec3_entropy-correctness_puzzle.tex
\section{Why does entropy track correctness in reasoning models?}
\label{sec:entropy-correctness_puzzle}

Internal entropy is defined entirely under a model’s predictive distribution (Definition~\ref{def:modelpredictive}), whereas correctness is defined with respect to an external ground-truth answer distribution (Definition~\ref{def:trueanswer}). There is therefore no \textit{a priori} reason for these two notions to be aligned: internal uncertainty could track stylistic variability, spurious hypotheses, or model-internal ambiguity unrelated to task success. Indeed, recent work cautions against treating intermediate tokens as faithful indicators of reasoning progress or task difficulty \cite{kambhampati2025stopanthropomorphizingintermediatetokens, palod2025performativethinkingbrittlecorrelation}.

\subsection{Empirical evidence for entropy-correctness alignment}

Despite this conceptual gap, numerous studies report a robust correlation between internal entropy dynamics and reasoning accuracy, across tasks, model families, and levels of granularity. This correlation is exploited for analysis, control, and prediction of reasoning behavior.

\paragraph{Analysis.}
High entropy is associated with overextrapolation (``hallucination'') and unreliable outputs, while entropy plateaus correspond to ``overthinking,'' where additional reasoning does not improve accuracy \cite{farquhar_detecting_2024}. Successful trajectories exhibit distinctive entropy patterns: uncertainty concentrates at critical “forking” steps and is systematically reduced thereafter \cite{qian2025demystifyingreasoningdynamicsmutual, wang20258020rulehighentropyminority}.

\paragraph{Control.}
Early-stopping methods terminate chain-of-thought generation once entropy plateaus or falls below a threshold \cite{ sharma2025thinkjustenoughsequencelevel}, while compression and exploration-based approaches treat entropy as a signal of decision points, pruning or expanding reasoning accordingly \cite{li2025compressingchainofthoughtllmsstep, zhang2025entropybasedexplorationconductionmultistep}.

\paragraph{Prediction.}
Entropy-based metrics can reliably predict whether an ongoing reasoning trajectory will ultimately be correct: traces with a decreasing entropy trajectory are much more likely to end in correct answers \cite{guo2025measuringreasoningutilityllms, liu2025tokensignaturepredictingchainofthought}.

Thus, internal uncertainty dynamics track external correctness closely enough that many methods implicitly treat entropy reduction as a proxy for reasoning progress. But why should this be true at all?

\subsection{Common justifications for entropy-based reasoning methods}

The literature offers several recurring explanations, none of which fully resolve the puzzle. A common implicit assumption is that reductions in entropy reflect a narrowing of the space of plausible solutions \cite{qian2025demystifyingreasoningdynamicsmutual, ton2025understandingchainofthoughtllmsinformation}. This interpretation presupposes that the uncertainty being reduced concerns the correct answer.

Other works appeal to training-induced alignment: since models are trained to produce correct answers, their internal uncertainty should track correctness \cite{sharma2025thinkjustenoughsequencelevel}. This would be compelling if it specified the structural properties of the learned distribution that ensure predictive entropy becomes aligned with the ground truth throughout a reasoning chain, but such conditions are not articulated.

Some analyses assume that reasoning steps reduce uncertainty about the true hypothesis \cite{ton2025understandingchainofthoughtllmsinformation}. However, this presupposes a coupling between intermediate model states and the ground-truth answer that is not derived from the training objective or the structure of the learned distribution.

Finally, many works offer no justification at all, treating the entropy--correctness correlation as an empirical fact to be exploited rather than a phenomenon to be explained \cite{liu2025tokensignaturepredictingchainofthought, zhang2025entropybasedexplorationconductionmultistep}. Exploiting a correlation, however, does not explain it. To our knowledge, no prior work asks why this alignment should arise, or under what conditions it should be expected to hold or fail.

%% file: Sec4_stepwise_informativeness_assumption.tex
\section{Stepwise Informativeness Assumption}
\label{sec:assumption}

To explain when internal uncertainty reflects external correctness, we formalize a minimal mechanism by which reasoning prefixes come to encode information about the true answer. Proofs for all lemmata, propositions, and theorems can be found in Appendix~\ref{app:proofs}.

\subsection{Stepwise information gain}
We introduce local, token-level quantities that capture how individual reasoning steps affect uncertainty about the answer. These quantities allow us to describe reasoning progress at the granularity of single tokens, before aggregating to prefix-level information.

\begin{definition}[Pointwise surprisal]
    For a sampled triple $(Q=q,A=a,C_{1:k}=c_{1:k})$, we define the \emph{pointwise conditional surprisal} as:
    $h(a\mid q,c_{<k})=-\log p(a \mid q, c_{<k})$
\end{definition}

\begin{definition}[Information gain]
    For a sampled triple $(Q=q,A=a,C_{1:k}=c_{1:k})$, we define the  \emph{pointwise information gain of step $k$} as $
    \Delta_k(q,a,c_{1:k}):=h(a\mid q,c_{<k})-h(a\mid q,c_{\leq k})
    \label{eq:pointwise-delta}$
\end{definition}

\begin{remark}
The interpretation of this quantity is: $\Delta_k>0$, the step makes the correct answer more probable (informative step); $\Delta_k<0$, the step makes the correct answer less probable (misinformative step).
\end{remark}

\begin{lemma}
The expected value of $\Delta_k$ equals the standard conditional mutual information: $\mathbb{E}[\Delta_k] = I(A; C_k \mid Q, C_{<k}) = H(A \mid Q, C_{<k}) - H(A \mid Q, C_{\le k}).$
\label{cl:exp_delta_is_I}
\end{lemma}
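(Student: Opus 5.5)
The plan is to unfold the definition of $\Delta_k$ and take the expectation term by term under a single joint distribution $p(Q,A,C_{1:k})$. The lemma leaves this distribution unspecified; we take it to be either the data distribution $r$ of Definition~\ref{def:datadistribution} or a model-induced joint. The argument is identical in both cases, since it uses only that all the conditionals $p(a\mid q,c_{<k})$ and $p(a\mid q,c_{\le k})$ are derived from that one joint by Bayes' rule. This is the one point needing care: if $h$ were computed under $p_\theta$ while the expectation were taken under $r$, the identity would hold only up to KL/cross-entropy correction terms. We therefore state explicitly that both come from the same $p$.

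\textbf{Step 1: split the expectation into two conditional entropies.} By linearity,
$\mathbb{E}[\Delta_k]=\mathbb{E}[-\log p(A\mid Q,C_{<k})]-\mathbb{E}[-\log p(A\mid Q,C_{\le k})]$.
In the first term the integrand depends only on $(Q,A,C_{<k})$. Marginalizing out $C_k$ gives
\[
-\sum_{q,a,c_{<k}} p(q,a,c_{<k})\log p(a\mid q,c_{<k}) \;=\; H(A\mid Q,C_{<k}),
\]
which matches the paper's definition of conditional entropy, with $X=(Q,C_{<k})$ and $Y=A$. Similarly, the second term equals $H(A\mid Q,C_{\le k})$. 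This proves the second equality of the lemma. All variables are discrete, and we assume finite support (or finite entropies) so that both expectations are finite and the subtraction is legitimate. We adopt the convention $0\log 0=0$, and we note that terms with $p(q,a,c_{\le k})=0$ do not contribute.

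\textbf{Step 2: identify the difference as conditional mutual information.} We write the difference as a single sum over the joint,
\[
\sum_{q,a,c_{\le k}} p(q,a,c_{\le k})\log\frac{p(a\mid q,c_{\le k})}{p(a\mid q,c_{<k})}.
\]
Multiplying the numerator and denominator inside the logarithm by $p(c_k\mid q,c_{<k})$ turns the ratio into
\[
\frac{p(a,c_k\mid q,c_{<k})}{p(a\mid q,c_{<k})\,p(c_k\mid q,c_{<k})}.
\]
This is the defining expression for $I(A;C_k\mid Q,C_{<k})$: the paper's mutual information definition applied conditionally, then averaged over $(Q,C_{<k})$. It also follows from the chain-rule identity $I(X;Y\mid Z)=H(X\mid Z)-H(X\mid Y,Z)$.

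\textbf{Main obstacle.} The only genuine subtlety is the measure-consistency issue noted at the start, together with the degenerate zero-probability terms. We would handle the first by fixing one joint distribution throughout, and the second by restricting the sums to the support.
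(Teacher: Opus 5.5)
Your proof is correct and uses the same ingredients as the paper's proof. You marginalize out $c_k$ in the term that does not depend on it to obtain $H(A\mid Q,C_{<k})$, and you multiply through by $p(c_k\mid q,c_{<k})$ to recognize $I(A;C_k\mid Q,C_{<k})$. The only difference is the order: the paper identifies the conditional mutual information first and then expands it into the entropy difference, whereas you split into the two entropies first. Your explicit remark that the expectation and all the conditionals must come from a single joint distribution makes precise an assumption the paper leaves implicit.
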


\begin{definition}[Cumulative gain]
    For a sampled triple $(Q=q,A=a,C_{1:k}=c_{1:k})$, we define the \emph{cumulative gain up to step $k$} as $G_k := \sum_{t=1}^k \Delta_t=h(a\mid q)-h(a\mid q,c_{\leq k}).$ In expectation, $\mathbb{E}[G_k]=I(A;C_{1:k} \mid Q)=\sum_{t=1}^k I(A;C_t \mid Q, C_{<t}).$
    \label{def:cumulativegain}
\end{definition}

Lastly, note that entropy and mutual-information quantities are always understood with respect to an underlying probability distribution. To make this explicit, we will attach the distribution as a subscript whenever there is ambiguity, e.g. $H_{p}(\cdot)$ and $I_{p}(\cdot)$ for a given probability distribution $p$. Also, we assume stochastic decoding from $p_\theta$. (Deterministic) greedy decoding is a degenerate case, since it selects the most probable token at each step, often collapsing token-level entropy and trivializing many of the entropy-based quantities studied, which obscures stepwise uncertainty dynamics.

\subsection{Stepwise Informativeness Assumption}

To relate model-internal uncertainty to external correctness, we introduce a joint coupling between the model’s reasoning traces and the ground-truth answer distribution. We consider
\(
\Pi := \{p(Q,C_{1:K},A) : p(Q,A)=p^\star(Q,A),\; p(C_{1:K}\mid Q)=p_\theta(C_{1:K}\mid Q)\},
\)
where $p^\star$ denotes the ground-truth query–answer distribution and $p_\theta$ the model’s predictive distribution over reasoning traces. This avoids imposing any conditional independence between $A$ and $C_{1:K}$ given $Q$.

\begin{proposition}[Conditional answer entropy as cumulative information]
\label{prop:entropy-information}
Under a fixed joint $p\in \Pi$ and for any prefix length $k \ge 1$,
\(
H_p(A \mid Q, C_{1:k})
=
H_p(A \mid Q) - \sum_{t=1}^k I_p(A; C_t \mid Q, C_{<t})
=
H_p(A \mid Q) - I_p(A; C_{1:k} \mid Q)
\)
\end{proposition}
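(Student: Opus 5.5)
The plan is to work entirely under the fixed joint $p\in\Pi$, which is a valid discrete distribution over $(Q,C_{1:K},A)$. The statement is then just the chain rule for mutual information together with the identity $I(X;Y\mid Z)=H(X\mid Z)-H(X\mid Y,Z)$. The coupling constraints defining $\Pi$ (matching $p^\star$ on $(Q,A)$ and $p_\theta$ on $C_{1:K}\mid Q$) play no role in the algebra. They only ensure that $H_p(A\mid Q)$ equals the ground-truth quantity $H_{p^\star}(A\mid Q)$ and that prefixes are distributed as the model generates them. Since the paper does not impose any conditional independence between $A$ and $C_{1:K}$ given $Q$, I will not use one.

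\textbf{Step 1.} I first establish, for each $t\ge1$, the one-step identity
\[
I_p(A;C_t\mid Q,C_{<t})=H_p(A\mid Q,C_{<t})-H_p(A\mid Q,C_{\le t}).
\]
This is exactly Lemma~\ref{cl:exp_delta_is_I}, applied with the pointwise surprisals computed under $p$. Alternatively, it follows directly by expanding
\[
I_p(A;C_t\mid Q,C_{<t})=\mathbb{E}_p\!\left[\log\frac{p(A\mid Q,C_{\le t})}{p(A\mid Q,C_{<t})}\right]
\]
and splitting the logarithm. For $t=1$, the prefix $C_{<1}$ is empty, so the first term is $H_p(A\mid Q)$.

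\textbf{Step 2.} Summing the identity over $t=1,\dots,k$ telescopes:
\[
\sum_{t=1}^k I_p(A;C_t\mid Q,C_{<t})=H_p(A\mid Q)-H_p(A\mid Q,C_{1:k}).
\]
Rearranging gives the first equality of the proposition.

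\textbf{Step 3.} The second equality needs $\sum_{t=1}^k I_p(A;C_t\mid Q,C_{<t})=I_p(A;C_{1:k}\mid Q)$. This is the chain rule for conditional mutual information, already recorded in Definition~\ref{def:cumulativegain}. It can also be seen directly: $I_p(A;C_{1:k}\mid Q)=H_p(A\mid Q)-H_p(A\mid Q,C_{1:k})$ by definition, and the right-hand side of Step 2 is exactly this.

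\textbf{Main obstacle.} The only real care point is well-definedness. All entropies must be finite and conditional probabilities must be defined only on the support of $p$, with the convention $0\log 0=0$. Otherwise the telescoping could involve $\infty-\infty$. I would handle this by noting that $\mathcal{A}$ is discrete, and assuming it is finite (or that $H_p(A\mid Q)<\infty$). Then every term is bounded by $H_p(A\mid Q)$, and all the manipulations above are justified.
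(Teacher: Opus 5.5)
Your proposal is correct and follows the paper's proof essentially verbatim: you decompose each term $I_p(A;C_t\mid Q,C_{<t})$ as a difference of conditional entropies, telescope over $t=1,\dots,k$, and invoke the chain rule for conditional mutual information for the second equality. Your added remark on finiteness of $H_p(A\mid Q)$ is a harmless extra that the paper leaves implicit by working with finite discrete alphabets.
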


Thus, under $p$, conditional answer entropy is not merely an internal uncertainty measure: it is a progress variable tracking how much information about the true answer has been accumulated.

This motivates a structural assumption linking entropy dynamics to correctness.

\begin{assumption}[Stepwise Informativeness Assumption (SIA)]
\label{ass:SIA}
Under a fixed joint $p\in\Pi$, prefixes are informative about the true answer in expectation:
\[
I_p(A;C_{1:k}\mid Q)
\ge \epsilon_k > 0
\quad
\text{for all } k\ge 1.
\] 
\end{assumption}

We call this the Stepwise Informativeness Assumption (SIA) because it implies that partial reasoning prefixes contain information about the final answer. Note that SIA does not require each individual reasoning token to be informative; rather, it constrains the total information contained in the prefix $C_{1:k}$. This formulation accounts for redundant tokens or stalling phrases that do not provide immediate marginal information but are part of a larger informative prefix.

SIA is a property of the joint coupling $p$, not of $p_\theta$ alone. Entropy reduction under the model’s internal posterior does not imply SIA unless prefixes are also informative about the true answer under an answer-consistent coupling. In the absence of SIA, conditional entropy may decrease for purely internal reasons while correctness does not improve.

When $\exists p \in \Pi$ and SIA holds, entropy-based reasoning diagnostics are theoretically justified: the sequence $\{\epsilon_k\}$ quantifies cumulative answer-relevant information gain, and the trajectory $H_p(A\mid Q,C_{1:k})$ characterizes whether the model is progressing toward the correct answer.

\subsubsection{Entropy constrains achievable accuracy}
\label{subsec:entropy_constrains}

\begin{theorem}[Entropy constrains achievable accuracy]
\label{th:entropy_accuracy}
Under a fixed joint $p\in \Pi$ and for any prefix length $k \ge 1$, let $\widehat A_k$ denote the Bayes-optimal predictor
based on $(Q, C_{1:k})$ under the posterior $p(A \mid Q, C_{1:k})$, and let $P_e^{(k)} := \Pr(\widehat A_k \neq A)$ denote its misclassification probability. Then
\[
P_e^{(k)}
\;\ge\;
\frac{H_p(A \mid Q, C_{1:k}) - \log 2}{\log(|\mathcal{A}| - 1)}, \qquad \text{where} |\mathcal{A}|>2
\]
\end{theorem}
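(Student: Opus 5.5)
This is Fano's inequality applied to the estimator $\widehat A_k$, which is a function of the observation $Y:=(Q,C_{1:k})$ under the joint $p\in\Pi$. The plan is to prove the standard Fano bound $H_p(A\mid Y)\le h_b(P_e^{(k)}) + P_e^{(k)}\log(|\mathcal{A}|-1)$ and then weaken it to the stated form. Here $h_b(x)=-x\log x-(1-x)\log(1-x)$ is the binary entropy in nats.

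First, introduce the error indicator $E:=\mathbf 1\{\widehat A_k\neq A\}$, so that $\Pr(E=1)=P_e^{(k)}$. Expand $H_p(A,E\mid Y)$ by the chain rule in two ways:
\[
H_p(A\mid Y)+H_p(E\mid A,Y)=H_p(E\mid Y)+H_p(A\mid E,Y).
\]
Since $\widehat A_k$ is a deterministic function of $Y$, the indicator $E$ is determined by $(A,Y)$, so $H_p(E\mid A,Y)=0$. Conditioning reduces entropy, which gives $H_p(E\mid Y)\le H_p(E)=h_b(P_e^{(k)})$. For the last term, split on the value of $E$. On $\{E=0\}$ we have $A=\widehat A_k(Y)$, so the conditional entropy is zero. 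On $\{E=1\}$, $A$ ranges over $\mathcal{A}\setminus\{\widehat A_k(Y)\}$, which has $|\mathcal{A}|-1$ elements, so its entropy is at most $\log(|\mathcal{A}|-1)$. Hence $H_p(A\mid E,Y)\le P_e^{(k)}\log(|\mathcal{A}|-1)$.

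Combining these gives $H_p(A\mid Q,C_{1:k})\le h_b(P_e^{(k)})+P_e^{(k)}\log(|\mathcal{A}|-1)$. Using $h_b\le\log 2$ in nats, this becomes $H_p(A\mid Q,C_{1:k})-\log 2\le P_e^{(k)}\log(|\mathcal{A}|-1)$. Dividing by $\log(|\mathcal{A}|-1)$, which is positive because $|\mathcal{A}|>2$, yields the claim. When the numerator is negative the bound holds trivially.

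There is no real obstacle. The only care needed is to check two points. First, the Bayes-optimal predictor is measurable with respect to $(Q,C_{1:k})$; this holds for any estimator, so optimality is not even needed. Second, $\mathcal{A}$ must be finite so that $\log(|\mathcal{A}|-1)$ is defined. If $\mathcal{A}$ is countably infinite, one should restrict to finite support or note that the bound is vacuous.
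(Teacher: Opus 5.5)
Your proof is correct and takes the paper's route: the paper cites Fano's inequality $H(A\mid Y)\le \log 2 + P_e\log(|\mathcal{A}|-1)$ with $Y=(Q,C_{1:k})$ and rearranges, while you derive Fano from scratch via the error indicator and the chain rule, passing through the sharper $h_b(P_e^{(k)})$ form before weakening it to $\log 2$. Your remark that Bayes-optimality of $\widehat A_k$ is not needed (any estimator that is a function of $(Q,C_{1:k})$ works) is also correct.
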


This bound shows that correctness is limited by how informative reasoning prefixes are about the true answer: prefixes that substantially reduce conditional answer entropy yield lower error, while weakly informative prefixes cannot support high accuracy, regardless of the predictor.

Theorem~\ref{th:entropy_accuracy} gives a necessary condition for correctness: a reasoning chain cannot be reliably correct unless its prefixes exhibit sufficiently low conditional answer entropy.

\subsubsection{Early vs.\ late information gain}
\label{subsec:early_vs_late}

Consider two reasoning chains that satisfy SIA. If one chain attains lower conditional answer entropy than the other over an initial segment of the reasoning trace, then throughout that segment it admits a strictly lower information-theoretic lower bound on achievable error (Theorem~\ref{th:entropy_accuracy}). Even when total information gain is matched by the end of the trace, earlier entropy reduction yields a larger fraction of tokens generated under low conditional entropy, where downstream steps are less likely to be derailed by sampling noise or spurious branches.

This leads to an operational criterion for detecting correct chains: correct reasoning chains should “lock onto” the answer early, before they are forced to by the monotonicity of conditional entropy.

\subsubsection{Saturation}

For many tasks, the total amount of answer-relevant information that can be extracted from a reasoning trace is finite. As conditional answer entropy decreases and approaches its minimum, the amount of remaining answer-relevant uncertainty necessarily shrinks. Consequently, any further reductions in conditional entropy must become progressively smaller and may eventually be negligible. When this occurs, conditional entropy effectively plateaus: additional reasoning steps cannot meaningfully reduce uncertainty about the answer. 

Reaching a plateau is not sufficient for correctness (as incorrect chains may also saturate around an erroneous hypothesis), but failure to saturate constitutes negative evidence against correctness.

\subsection{Why is SIA a reasonable assumption?}
\label{subsec:SIA_reasonable}

SIA is not guaranteed to hold universally: prefixes might not be informative about the true answer. But why is it reasonable to expect it to hold?

\subsubsection{Stepwise Informativeness in human-generated reasoning traces}

Human reasoning traces often exhibit progressive accumulation of answer-relevant information, even without explicit optimization for correctness. This follows from general constraints on sequential information processing.

Recent research  \cite{futrell_linguistic_2025} shows that under realistic cognitive constraints (limited memory, attention, and processing capacity) sequential signals that minimize \textit{predictive information}, the mutual information between past and future, adopt a characteristic structure: information is decomposed into approximately independent components that are expressed locally and incrementally. This yields sequences that are systematically and progressively informative, closely matching the structure of natural language and assisting in downstream sequence prediction. 

Human-written reasoning traces are a special case of such sequential signals, with the additional property that their future includes the correct answer. Under the same constraints, earlier prefixes are therefore expected to increasingly constrain the space of plausible continuations and answers. As reasoning unfolds, the correct answer becomes more predictable in aggregate.

Formally, let \(C_{1:K}\) denote a human-generated chain-of-thought and \(A\) the correct answer. If reasoning traces minimize predictive information, then prefixes \(C_{1:k}\) carry increasing mutual information about future tokens, including \(A\). Equivalently, under a data generating distribution $r(Q,C_{1:K},A)$, the conditional answer entropy \(H_r(A \mid Q, C_{1:k})\) decreases with \(k\) in expectation, implying growing prefix-level mutual information \(I_r(A; C_{1:k} \mid Q)\).

Crucially, this argument does not assume that humans optimize intermediate steps for correctness or have access to the answer distribution during generation. Stepwise informativeness instead emerges as a structural consequence of general cognitive pressures on sequential communication.

\subsubsection{Transfer of Stepwise Informativeness under Maximum Likelihood Training}

We now study whether stepwise informativeness present in human-generated reasoning traces transfers to a model via MLE training. 

\begin{lemma}
\label{lemma:log-likelihood_identity}
Let $r$ denote the empirical data-generating distribution over full sequences 
$X = (Q,C_{1:K},A)$, and let $p_\theta$ be the model distribution.  
The negative log-likelihood objective $\mathcal{L}(\theta) \;=\; \mathbb{E}_{X\sim r}\!\left[-\log p_\theta(X)\right]$
satisfies the identity
\[
\mathcal{L}(\theta)
= -\sum_{x} r(x)\log p_\theta(x)
= H_r(\cdot) + \mathrm{KL}\!\left(r \,\Vert\, p_\theta\right),
\]
where $H_r(\cdot)$ is independent of $\theta$.  
Hence minimizing $\mathcal{L}(\theta)$ is equivalent to minimizing 
$\mathrm{KL}(r \,\Vert\, p_\theta)$, and thus drives $p_\theta$ toward $r$ in forward KL-divergence, up to model capacity.
\end{lemma}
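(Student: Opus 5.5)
The identity follows by adding and subtracting $\sum_x r(x)\log r(x)$ inside the expectation that defines $\mathcal{L}(\theta)$. I will treat all sequences $X=(Q,C_{1:K},A)$ as elements of a countable set $\mathcal{X}$, which is consistent with the standing assumption that all random variables are discrete. The support of $r$ is $\mathcal{S}=\{x: r(x)>0\}$. First, I write the expectation as a sum: $\mathcal{L}(\theta)=-\sum_{x\in\mathcal{S}} r(x)\log p_\theta(x)$. Points with $r(x)=0$ contribute nothing, by the convention $0\log 0=0$ used in the paper's definitions.

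Second, on $\mathcal{S}$ I use the pointwise decomposition
\[
-\log p_\theta(x) = -\log r(x) + \log\frac{r(x)}{p_\theta(x)}.
\]
Multiplying by $r(x)$ and summing over $\mathcal{S}$ gives
\[
\mathcal{L}(\theta)=-\sum_{x} r(x)\log r(x)+\sum_x r(x)\log\frac{r(x)}{p_\theta(x)} = H_r(\cdot)+\mathrm{KL}(r\Vert p_\theta).
\]
The first term is the entropy of the joint data distribution $r(Q,C_{1:K},A)$. It depends only on $r$, so it is independent of $\theta$.

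Third, because $H_r$ is a $\theta$-independent constant, $\arg\min_\theta \mathcal{L}(\theta)=\arg\min_\theta \mathrm{KL}(r\Vert p_\theta)$. By Gibbs' inequality, $\mathrm{KL}(r\Vert p_\theta)\ge 0$, with equality iff $p_\theta=r$. So whenever the model class can represent $r$, the minimizer is $p_\theta=r$. Otherwise the minimizer is the forward-KL projection of $r$ onto the model family, which is what ``up to model capacity'' means. I will also note that the autoregressive factorization of Definition~\ref{def:modelpredictive} does not affect the argument, since $\log p_\theta(x)$ is simply the sum of the per-token log-probabilities, and this recovers $\mathcal{L}_{\text{CE}}$ as an expectation.

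The main obstacle is to make sure the sums are well defined rather than of the form $\infty-\infty$. If $H_r=\infty$ on a countably infinite space, the split is meaningless, and if $p_\theta(x)=0$ for some $x\in\mathcal{S}$, both $\mathcal{L}$ and the KL term equal $+\infty$. I will handle this by assuming $H_r<\infty$, which is automatic for an empirical distribution on finitely many sequences, and by treating the second case as an equality in $[0,\infty]$. Under these conditions the termwise rearrangement is justified by splitting into nonnegative parts.
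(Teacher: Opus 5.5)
Your proposal is correct and follows the paper's proof essentially verbatim: you add and subtract $\sum_x r(x)\log r(x)$, identify the entropy and forward-KL terms, and use the $\theta$-independence of $H_r$ to conclude. Your extra care about the support of $r$, the case $p_\theta(x)=0$ on that support, and finiteness of $H_r$ (plus using Gibbs' inequality to explain ``up to model capacity'') is a harmless refinement that the paper leaves implicit.
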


\begin{lemma}[KL Decomposition of the Joint Conditional]
\label{lemma:KLDecompositionoftheJointConditional}
For any joint distributions $r$ and $p_\theta$ over $(C_{1:K},A)$ conditioned on $Q$,
the following identity holds:
\begin{equation*}
\begin{aligned}
&\mathrm{KL}\!\left(
r(C_{1:K},A\mid Q)
\;\Vert\;
p_\theta(C_{1:K},A\mid Q)
\right)
= \\
&= \mathrm{KL}\!\left(r(C_{1:K}\mid Q)\,\Vert\,p_\theta(C_{1:K}\mid Q)\right)
+ \\
&\mathbb{E}_{r(C_{1:K}\mid Q)}
\!\left[
\mathrm{KL}\!\left(
r(A\mid Q,C_{1:K})
\,\Vert\,
p_\theta(A\mid Q,C_{1:K})
\right)
\right].
\end{aligned}
\end{equation*}
\label{lemma:joint-kl-decomposition}
\end{lemma}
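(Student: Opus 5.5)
The plan is to prove the identity pointwise in $Q=q$ with the chain rule for the joint density, and then verify that the logarithm splits additively. Fix $q$ and write $r(c,a\mid q)$ and $p_\theta(c,a\mid q)$ for the conditional joints over $(C_{1:K},A)$, with $c$ standing for a full trace $c_{1:K}$. I factor each joint as a trace marginal times an answer conditional:
\[
r(c,a\mid q)=r(c\mid q)\,r(a\mid q,c), \qquad p_\theta(c,a\mid q)=p_\theta(c\mid q)\,p_\theta(a\mid q,c).
\]
On the model side this is exactly the autoregressive structure in Definition~\ref{def:modelpredictive}. On the data side it is just the definition of conditional probability applied to $r$ from Definition~\ref{def:datadistribution}.

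Next I substitute these factorizations into
\[
\mathrm{KL}\bigl(r(C_{1:K},A\mid q)\,\Vert\,p_\theta(C_{1:K},A\mid q)\bigr)=\sum_{c,a} r(c,a\mid q)\log\frac{r(c,a\mid q)}{p_\theta(c,a\mid q)}.
\]
The log-ratio then separates into $\log\frac{r(c\mid q)}{p_\theta(c\mid q)}+\log\frac{r(a\mid q,c)}{p_\theta(a\mid q,c)}$.

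\begin{itemize}
\item In the first term the summand does not depend on $a$. Summing out $a$ using $\sum_a r(a\mid q,c)=1$ leaves $\mathrm{KL}(r(C_{1:K}\mid q)\Vert p_\theta(C_{1:K}\mid q))$.
\item In the second term I write $r(c,a\mid q)=r(c\mid q)\,r(a\mid q,c)$ and do the inner sum over $a$ first. This gives $\sum_c r(c\mid q)\,\mathrm{KL}(r(A\mid q,c)\Vert p_\theta(A\mid q,c))$, which is the stated expectation under $r(C_{1:K}\mid Q)$.
\end{itemize}

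The only step needing care is the support bookkeeping, and I expect it to be the main (mild) obstacle.
\begin{itemize}
\item Terms with $r(c,a\mid q)=0$ contribute zero under the convention $0\log 0=0$.
\item If the left-hand side is finite, absolute continuity $r\ll p_\theta$ on the joint gives it on the trace marginal. It also gives it on each answer conditional at every $c$ with $r(c\mid q)>0$, so every logarithm in the split is finite and the rearrangement of the sum is legitimate.
\item If the left-hand side is $+\infty$, some $(c,a)$ has $r>0$ and $p_\theta=0$. Then either $p_\theta(c\mid q)=0$, making the first right-hand term infinite, or $p_\theta(a\mid q,c)=0$ with $r(c\mid q)>0$, making the expectation infinite. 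So the identity also holds in the extended reals.
\end{itemize}

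Finally I note that the derivation is valid for each fixed $q$. So it holds as an identity of functions of $Q$, and taking an outer expectation over $Q$ preserves it.
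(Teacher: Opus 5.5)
Your proof is correct and follows the same route as the paper. Both factor each conditional joint via the chain rule, split the log-ratio into a trace term and an answer term, and collapse these into the marginal KL and the expected conditional KL. Your support bookkeeping, including the $+\infty$ case, is a careful addition that the paper's proof omits, but it does not change the argument.
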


\begin{lemma}[MLE Implies Marginal and Conditional Alignment]
\label{lemma:MLE_implies_alignment}
Let $Q$ be any question in the support of $r(Q)$.
Given $\delta>0$, if
\(
\mathrm{KL}\!\left(
r(C_{1:K},A \mid Q)\,
\Vert\,
p_\theta(C_{1:K},A \mid Q)
\right)
\le \delta,
\label{eq:joint-smallness}
\)
then 
\(
\mathrm{KL}\!\left(
    r(C_{1:K}\mid Q)
    \,\Vert\,
    p_\theta(C_{1:K}\mid Q)
    \right)
    \le \delta
\) and 
\(
\mathbb{E}_{r(C_{1:K}\mid Q)}
    \!\left[
    \mathrm{KL}\!\left(
    r(A\mid Q,C_{1:K})
    \,\Vert\,
    p_\theta(A\mid Q,C_{1:K})
    \right)
    \right]
    \le \delta .
\)
\end{lemma}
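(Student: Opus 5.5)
The argument follows directly from the chain-rule decomposition in Lemma~\ref{lemma:joint-kl-decomposition} together with nonnegativity of KL divergence. Fix $Q$ in the support of $r(Q)$. The joint KL between $r(C_{1:K},A\mid Q)$ and $p_\theta(C_{1:K},A\mid Q)$ splits into two terms. The first is the marginal term $\mathrm{KL}(r(C_{1:K}\mid Q)\,\Vert\,p_\theta(C_{1:K}\mid Q))$. The second is the averaged conditional term $\mathbb{E}_{r(C_{1:K}\mid Q)}[\mathrm{KL}(r(A\mid Q,C_{1:K})\,\Vert\,p_\theta(A\mid Q,C_{1:K}))]$. By hypothesis their sum is at most $\delta$.

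Next, I will check that each summand is nonnegative. The marginal term is a KL divergence between two distributions, so it is nonnegative by Gibbs' inequality. The conditional term is the expectation, under $r(C_{1:K}\mid Q)$, of the nonnegative quantity $\mathrm{KL}(r(A\mid Q,c_{1:K})\,\Vert\,p_\theta(A\mid Q,c_{1:K}))$, so it is also nonnegative. Since two nonnegative numbers sum to at most $\delta$, each is at most $\delta$, which gives both claimed bounds.

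The only point needing care is well-definedness, which I expect to be the main (minor) obstacle. The conditional $r(A\mid Q,c_{1:K})$ is defined only for $c_{1:K}$ with $r(c_{1:K}\mid Q)>0$, and the expectation only weights such traces, so undefined conditionals never enter. If $p_\theta$ assigns zero mass where $r$ does not, the joint KL is $+\infty$, which contradicts the finite bound $\delta$. So under the hypothesis every term is finite, and the decomposition of Lemma~\ref{lemma:joint-kl-decomposition} holds as an identity of finite nonnegative reals. No other ingredients are needed.
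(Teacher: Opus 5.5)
Your proof is correct and follows the paper's argument: split the joint KL via Lemma~\ref{lemma:joint-kl-decomposition} into the marginal term and the averaged conditional term, note that both are nonnegative, and conclude that each is at most $\delta$. Your remark on well-definedness is a useful detail the paper leaves implicit: a finite joint KL forces absolute continuity, so every conditional that enters the expectation is defined and finite.
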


The formal guarantee relies on continuity of entropy and conditional mutual information on finite alphabets. To relate informativeness under the data-generating distribution $r$ to that under the model distribution $p_\theta$, we require that entropy be stable under small distributional perturbations. 

\begin{lemma}[Continuity of Entropy under KL]\label{lemma:entropy-continuity}
Let $P$ and $Q$ be probability distributions on a finite alphabet
$\mathcal{X}$ satisfying $\mathrm{KL}(P \Vert Q) \le \delta.$
Then there exists a function $f_\mathcal{X} : [0,\infty) \to [0,\infty)$
with $f_\mathcal{X}(\delta) \to 0$ as $\delta \to 0$ such that $\bigl| H(P) - H(Q) \bigr| \le f_\mathcal{X}(\delta).$ In particular, for every $\varepsilon > 0$ there exists
$\delta > 0$ such that $\mathrm{KL}(P \Vert Q) \le \delta \quad\Longrightarrow\quad
\bigl| H(P) - H(Q) \bigr| \le \varepsilon.$
\end{lemma}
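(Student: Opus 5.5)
The natural route is Pinsker's inequality followed by uniform continuity of entropy in total variation on a finite alphabet. Let $n := |\mathcal{X}|$ and write $\|P-Q\|_1 = \sum_x |P(x)-Q(x)|$.

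\textbf{Step 1 (Pinsker).} From $\mathrm{KL}(P\Vert Q)\le\delta$, Pinsker's inequality (natural logarithms) gives
\[
\tfrac12\|P-Q\|_1 \le \sqrt{\delta/2}.
\]
So $\theta := \tfrac12\|P-Q\|_1 \le \sqrt{\delta/2}$.

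\textbf{Step 2 (Fannes--Audenaert continuity).} Use the sharp continuity bound for Shannon entropy: if $\theta = \tfrac12\|P-Q\|_1$, then
\[
|H(P)-H(Q)| \le \theta\log(n-1) + h_b(\theta),
\]
where $h_b(\theta) = -\theta\log\theta-(1-\theta)\log(1-\theta)$ is the binary entropy. This holds for $\theta\le 1$ when $n \ge 2$; if $n=1$ both entropies vanish.

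If one prefers a self-contained argument, the weaker classical bound also suffices. For $\|P-Q\|_1\le 1/2$ one has $|H(P)-H(Q)| \le -\|P-Q\|_1\log(\|P-Q\|_1/n)$. This follows from two facts:
\begin{itemize}
\item the modulus of continuity of $\phi(t)=-t\log t$ on $[0,1]$, namely $|\phi(s)-\phi(t)|\le \phi(|s-t|)$ for $|s-t|\le 1/2$;
\item concavity of $\phi$, which controls $\sum_x \phi(|P(x)-Q(x)|)$ by $n\,\phi(\|P-Q\|_1/n)$ via Jensen.
\end{itemize}

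\textbf{Step 3 (define $f_\mathcal{X}$).} Set $\theta(\delta)=\min\{\sqrt{\delta/2},\,1/2\}$ and
\[
f_\mathcal{X}(\delta) := \theta(\delta)\log(n-1)+h_b(\theta(\delta)) \quad \text{for } \delta<1/2,
\]
and $f_\mathcal{X}(\delta) := \log n$ otherwise. The value $\log n$ is a trivial bound, since $0\le H\le\log n$ on $\mathcal{X}$.

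Since $\theta\log(n-1) + h_b(\theta)$ is nondecreasing in $\theta$ on $[0,1/2]$, Steps 1 and 2 give $|H(P)-H(Q)|\le f_\mathcal{X}(\delta)$. Moreover $h_b(\theta)\to0$ as $\theta\to0$, so $f_\mathcal{X}(\delta)\to0$ as $\delta\to 0$.

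The ``in particular'' clause is just the $\varepsilon$–$\delta$ restatement of this limit. Given $\varepsilon>0$, choose $\delta>0$ with $f_\mathcal{X}(\delta)\le\varepsilon$, which is possible by continuity and monotonicity of $f_\mathcal{X}$ near $0$.

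\textbf{Main obstacle.} The only delicate point is the continuity estimate in Step 2. The map $-t\log t$ is not Lipschitz at $0$, so a naive mean-value argument fails. That is why one must either invoke the Fannes--Audenaert inequality as a known result, or carry out the modulus-of-continuity and Jensen argument carefully, including the restriction $\|P-Q\|_1\le 1/2$. The dependence on $n$ is unavoidable, which is why $f$ is indexed by $\mathcal{X}$.
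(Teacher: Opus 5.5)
Your proposal is correct and follows the same route as the paper: Pinsker's inequality turns $\mathrm{KL}(P\Vert Q)\le\delta$ into total variation at most $\sqrt{\delta/2}$, and the Fannes--Audenaert bound $\theta\log(|\mathcal{X}|-1)+h_2(\theta)$ then controls the entropy gap. You are slightly more careful than the paper in two places it leaves implicit: you state the monotonicity of this bound in $\theta$, which is needed to replace the true distance by $\sqrt{\delta/2}$, and you define $f_\mathcal{X}$ on all of $[0,\infty)$ using the trivial bound $\log|\mathcal{X}|$.
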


The same argument extends to conditional entropy by applying continuity to the relevant joint and marginal distributions. 

\begin{lemma}[Continuity of Conditional Entropy]\label{lemma:cond-entropy-continuity}
Let $P$ and $Q$ be distributions on a finite product alphabet
$\mathcal{X} \times \mathcal{Y}$, with $\mathrm{KL}(P \Vert Q) \le \delta.$
Then there exists a function $g_{\mathcal{X},\mathcal{Y}}(\delta)$
with $g_{\mathcal{X},\mathcal{Y}}(\delta) \to 0$ as $\delta \to 0$
such that $\bigl| H_P(Y \mid X) - H_Q(Y \mid X) \bigr|
\le g_{\mathcal{X},\mathcal{Y}}(\delta).$
Equivalently, for every $\varepsilon > 0$ there exists $\delta > 0$
such that $\mathrm{KL}(P \Vert Q) \le \delta
\quad\Longrightarrow\quad
\bigl| H_P(Y \mid X) - H_Q(Y \mid X) \bigr| \le \varepsilon.$
\end{lemma}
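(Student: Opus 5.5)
The plan is to reduce the statement to Lemma~\ref{lemma:entropy-continuity} through the chain rule. Write
\[
H_P(Y\mid X) = H(P_{XY}) - H(P_X), \qquad H_Q(Y\mid X) = H(Q_{XY}) - H(Q_X).
\]
The triangle inequality then gives
\[
\bigl|H_P(Y\mid X)-H_Q(Y\mid X)\bigr| \le \bigl|H(P_{XY})-H(Q_{XY})\bigr| + \bigl|H(P_X)-H(Q_X)\bigr| .
\]
So it suffices to bound each of the two entropy differences by a function of $\delta$ that vanishes as $\delta\to 0$.

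The joint term is a direct application of Lemma~\ref{lemma:entropy-continuity}, taking the finite alphabet to be $\mathcal{X}\times\mathcal{Y}$. This gives $|H(P_{XY})-H(Q_{XY})|\le f_{\mathcal{X}\times\mathcal{Y}}(\delta)$.

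For the marginal term, first transfer the KL bound to the marginals. By the chain rule for KL,
\[
\mathrm{KL}(P_{XY}\Vert Q_{XY}) = \mathrm{KL}(P_X\Vert Q_X) + \mathbb{E}_{P_X}\!\left[\mathrm{KL}(P_{Y\mid X}\Vert Q_{Y\mid X})\right].
\]
This is the same decomposition as Lemma~\ref{lemma:KLDecompositionoftheJointConditional}, with $X$ in the role of $C_{1:K}$ and $Y$ in the role of $A$. Since both terms on the right are nonnegative, $\mathrm{KL}(P_X\Vert Q_X)\le\delta$; alternatively this follows from the data-processing inequality. Lemma~\ref{lemma:entropy-continuity} on $\mathcal{X}$ then gives $|H(P_X)-H(Q_X)|\le f_{\mathcal{X}}(\delta)$. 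Setting
\[
g_{\mathcal{X},\mathcal{Y}}(\delta):=f_{\mathcal{X}\times\mathcal{Y}}(\delta)+f_{\mathcal{X}}(\delta)
\]
gives the bound, and $g_{\mathcal{X},\mathcal{Y}}(\delta)\to0$ as $\delta\to0$. The $\varepsilon$–$\delta$ form follows by choosing $\delta$ small enough that $g_{\mathcal{X},\mathcal{Y}}(\delta)\le\varepsilon$.

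No step is a serious obstacle, but two points need care. First, the chain rule $H(Y\mid X)=H(X,Y)-H(X)$ must hold even when $P_X$ or $Q_X$ gives zero mass to some $x$; this is fine under the paper's convention $0\log 0=0$. Second, the marginal KL bound must actually be justified by monotonicity rather than assumed. If one wants explicit rates, Pinsker's inequality gives $\|P-Q\|_1\le\sqrt{2\delta}$, and the Fannes-type bound $|H(P)-H(Q)|\le \tfrac12\|P-Q\|_1\log(|\mathcal{X}|-1)+h_b(\tfrac12\|P-Q\|_1)$ then yields a concrete $g$. This is optional, since Lemma~\ref{lemma:entropy-continuity} already provides the modulus.
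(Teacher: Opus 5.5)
Your proposal is correct and matches the paper's proof: the identity $H(Y\mid X)=H(X,Y)-H(X)$, the triangle inequality, monotonicity of KL under marginalization, and two applications of Lemma~\ref{lemma:entropy-continuity}, giving $g_{\mathcal{X},\mathcal{Y}}=f_{\mathcal{X}\times\mathcal{Y}}+f_{\mathcal{X}}$. The only cosmetic difference is that you justify the marginal KL bound through the chain rule of Lemma~\ref{lemma:KLDecompositionoftheJointConditional}, whereas the paper simply states that marginalization cannot increase KL divergence.
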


Combining these results yields continuity of conditional mutual information, which enables internal stepwise informativeness transfer from $r$ to $p_\theta$ up to an arbitrarily small error.

\begin{lemma}[Continuity of Conditional Mutual Information]\label{lemma:mi-continuity}
Let $r$ and $p_\theta$ be distributions on a finite product alphabet
$\mathcal{Q}\times\mathcal{C}_1\times\cdots\times\mathcal{C}_K
\times\mathcal{A}$, and fix $k\in\{1,\dots,K\}$. Suppose that $\mathrm{KL}(r \Vert p_\theta) \le \delta.$ Then there exists a function $G_k(\delta)$ with $G_k(\delta)\to 0$ as
$\delta\to 0$ such that $\bigl|
I_r(A; C_{\le k} \mid Q)
-
I_{p_\theta}(A; C_{\le k} \mid Q)
\bigr|
\le G_k(\delta),$ where the mutual informations and entropies on the right-hand side are computed under $p_\theta$, and those on the left-hand side under $r$. Equivalently, for every $\varepsilon>0$ there exists $\delta>0$ such
that $\mathrm{KL}(r \Vert p_\theta) \le \delta
\quad\Longrightarrow\quad
\bigl|
I_r(A; C_{\le k} \mid Q)
-
I_{p_\theta}(A; C_{\le k} \mid Q)
\bigr|
\le \varepsilon.$
\end{lemma}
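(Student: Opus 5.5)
The plan is to write the conditional mutual information as a signed combination of conditional entropies, and then apply Lemma~\ref{lemma:cond-entropy-continuity} to each term.

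Write $C_{\le k} := C_{1:k}$. Under any distribution $p$ on the finite alphabet,
\[
I_p(A; C_{\le k}\mid Q) = H_p(A\mid Q) - H_p(A\mid Q, C_{\le k}).
\]
This holds for both $r$ and $p_\theta$. By the triangle inequality,
\[
\bigl|I_r - I_{p_\theta}\bigr| \le \bigl|H_r(A\mid Q) - H_{p_\theta}(A\mid Q)\bigr| + \bigl|H_r(A\mid Q,C_{\le k}) - H_{p_\theta}(A\mid Q,C_{\le k})\bigr|.
\]
It therefore suffices to bound each difference by a function that vanishes as $\delta\to 0$.

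Each term involves a pair of variables, so each needs a KL bound on the corresponding marginal. This comes from the data-processing inequality for KL divergence. Marginalization is a deterministic map applied to both distributions, so it cannot increase KL. Hence
\[
\mathrm{KL}(r_{Q,A}\Vert p_{\theta,Q,A}) \le \delta \quad\text{and}\quad \mathrm{KL}(r_{Q,C_{\le k},A}\Vert p_{\theta,Q,C_{\le k},A}) \le \delta.
\]
One could also derive these bounds from the chain rule, in the same way as Lemma~\ref{lemma:KLDecompositionoftheJointConditional}: drop the nonnegative conditional KL term, as in Lemma~\ref{lemma:MLE_implies_alignment}.

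Next, apply Lemma~\ref{lemma:cond-entropy-continuity} twice:
\begin{itemize}
\item first with $X=Q$, $Y=A$ on the alphabet $\mathcal{Q}\times\mathcal{A}$;
\item then with $X=(Q,C_{\le k})$, $Y=A$ on the alphabet $(\mathcal{Q}\times\mathcal{C}_1\times\cdots\times\mathcal{C}_k)\times\mathcal{A}$.
\end{itemize}
Both alphabets are finite. This gives
\[
G_k(\delta) := g_{\mathcal{Q},\mathcal{A}}(\delta) + g_{\mathcal{Q}\times\mathcal{C}_{\le k},\mathcal{A}}(\delta),
\]
and $G_k(\delta)\to 0$ as $\delta\to 0$. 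The $\varepsilon$--$\delta$ form then follows. Given $\varepsilon$, choose $\delta$ so that each $g$ term is at most $\varepsilon/2$, which is possible because each $g$ tends to $0$.

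The main obstacle is justifying that the KL bound passes to the marginals. We need the bound on the specific marginal used in each invocation, not just on the full joint. I would state the monotonicity of KL under marginalization explicitly as a one-line consequence of the chain rule: $\mathrm{KL}(P_{XY}\Vert Q_{XY}) = \mathrm{KL}(P_X\Vert Q_X) + \mathbb{E}_{P_X}\mathrm{KL}(P_{Y|X}\Vert Q_{Y|X}) \ge \mathrm{KL}(P_X\Vert Q_X)$.

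A secondary subtlety is that Lemma~\ref{lemma:cond-entropy-continuity} may be stated for the full product alphabet, while we are applying it to marginals. Applying it on the smaller alphabets is legitimate since they are finite. Alternatively, one can avoid marginal alphabets altogether by applying Lemma~\ref{lemma:cond-entropy-continuity} directly to the full joint, with $Y=A$ and $X$ the remaining coordinates, and then recovering $H(A\mid Q)$ and $H(A\mid Q,C_{\le k})$ through continuity of joint entropies via Lemma~\ref{lemma:entropy-continuity}. I would only fall back to this if the marginal approach proved awkward.
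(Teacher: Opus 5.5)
Your proposal is correct and follows the paper's proof essentially step for step. Both write $I(A;C_{\le k}\mid Q)=H(A\mid Q)-H(A\mid Q,C_{\le k})$ and apply the triangle inequality. Both pass the KL bound to the $(Q,A)$ and $(Q,C_{\le k},A)$ marginals by monotonicity under marginalization, then invoke Lemma~\ref{lemma:cond-entropy-continuity} twice and sum the two resulting functions to define $G_k(\delta)$.
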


\begin{theorem}[Transfer of internal stepwise informativeness to the model]
\label{thm:transfer-sia}
Given a step $k \geq 1$, let $r$ denote the empirical data joint over $(Q,C_{1:K},A)$, and suppose that $I_r(A;C_{\leq k} | Q)>0$. Let $p_\theta$ denote the model distribution over $(Q,C_{1:K},A)$, then there exists $\delta_k > 0$ such that, whenever
\(
\mathrm{KL}(r \,\Vert\, p_\theta) \le \delta_k,
\)
the model joint $p_\theta$ satisfies
\(
I_{p_\theta}(A; C_{\leq k} \mid Q) \ge \frac{\varepsilon_k}{2} > 0.
\)
\end{theorem}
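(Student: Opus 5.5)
The plan is a direct continuity argument built on Lemma~\ref{lemma:mi-continuity}. The statement leaves $\varepsilon_k$ implicit, so first we fix it: set $\varepsilon_k := I_r(A;C_{\le k}\mid Q)$, which is strictly positive by hypothesis. The goal is then to find a tolerance $\delta_k$ under which the model's prefix-level mutual information loses at most half of the data's.

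We apply Lemma~\ref{lemma:mi-continuity} with target accuracy $\varepsilon_k/2$. It provides a function $G_k$ with $G_k(\delta)\to 0$ as $\delta\to 0$, so there is $\delta_k>0$ such that $G_k(\delta)\le \varepsilon_k/2$ for all $\delta\le\delta_k$. If $G_k$ is not monotone, we instead use the lemma's ``for every $\varepsilon$ there exists $\delta$'' formulation directly, which already gives this. Now suppose $\mathrm{KL}(r\Vert p_\theta)\le\delta_k$. Then
\[
\bigl|I_r(A;C_{\le k}\mid Q)-I_{p_\theta}(A;C_{\le k}\mid Q)\bigr|\le \tfrac{\varepsilon_k}{2}.
\]
The reverse triangle inequality gives
\[
I_{p_\theta}(A;C_{\le k}\mid Q)\ \ge\ I_r(A;C_{\le k}\mid Q)-\tfrac{\varepsilon_k}{2}\ =\ \tfrac{\varepsilon_k}{2}\ >\ 0,
\]
which is the claim.

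The main obstacle sits upstream, in the validity of Lemma~\ref{lemma:mi-continuity}, which we are allowed to assume. Two points need checking before invoking it. First, the joint alphabet $\mathcal{Q}\times\mathcal{C}_1\times\cdots\times\mathcal{C}_K\times\mathcal{A}$ must be finite, so that the continuity constants are uniform; this holds because the vocabulary is finite and the chain length is bounded by $K$. Second, the KL bound on the full joint must control the relevant marginals over $(Q,C_{\le k},A)$, $(Q,C_{\le k})$, $(Q,A)$ and $Q$. This follows from the data-processing inequality for KL under marginalization, in the same spirit as the decomposition in Lemma~\ref{lemma:KLDecompositionoftheJointConditional}.

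If we wanted an explicit $\delta_k$, we would expand
\[
I(A;C_{\le k}\mid Q)=H(A,Q)+H(C_{\le k},Q)-H(A,C_{\le k},Q)-H(Q),
\]
and bound each of the four entropy differences using Pinsker's inequality together with the Fannes/Audenaert continuity bound. This gives $G_k(\delta)\lesssim 4\sqrt{2\delta}\,\log|\mathcal{X}|$ up to a lower-order binary-entropy term, which can then be inverted to solve for $\delta_k$.

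Finally, we note what the theorem does and does not give. It shows that the \emph{internal} informativeness $I_{p_\theta}$ inherits a positive lower bound, one step $k$ at a time. It does not give a uniform $\delta$ across all $k$ unless $K$ is finite, in which case we take $\min_k\delta_k$.
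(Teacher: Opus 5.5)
Your proposal is correct and follows essentially the same route as the paper: fix $\varepsilon_k$ from the positivity of $I_r(A;C_{\le k}\mid Q)$, use Lemma~\ref{lemma:mi-continuity} to choose $\delta_k$ with $G_k(\delta_k)\le\varepsilon_k/2$, and conclude $I_{p_\theta}(A;C_{\le k}\mid Q)\ge\varepsilon_k/2$ by the triangle inequality. (The paper allows any $0<\varepsilon_k\le I_r$ where you take equality, which is immaterial.) One small slip in your optional explicit-rate remark: $h_2\bigl(\sqrt{\delta/2}\bigr)$ is of order $\sqrt{\delta}\log(1/\delta)$, so as $\delta\to 0$ it dominates the $\sqrt{\delta}\log|\mathcal{X}|$ term rather than being lower-order, although this does not affect the main argument.
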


The transfer result establishes that if the data-generating distribution $r$ exhibits stepwise informativeness, then a model trained under MLE will inherit an \emph{internal} version of this property, which does not by itself imply SIA.

Nonetheless, when supervision consists of explicit triples $(Q,C_{1:K},A)$, the objective has a well-defined target: the correct answer $A$. Under MLE, prefixes that systematically increase the probability of $A$ are reinforced, and predictive information is therefore concentrated on intermediate steps that progressively constrain the answer space toward correctness. In contrast, during large-scale pretraining, reasoning-like continuations are embedded in a corpus where next-token prediction is governed by distributional regularities rather than any particular ground-truth objective. As a result, the model may learn to produce locally coherent reasoning patterns without those prefixes being systematically informative about a true answer variable. Thus, while both regimes optimize next-token predictability, only SFT systematically ties predictive information to answer-relevant structure, making SIA behavior empirically more likely after supervision than after pretraining alone. 

\subsection{Regimes in which SIA does not hold}
\label{subsec:sia_not_hold}

Entropy-based diagnostics are not theoretically justified if training fails to induce an answer-compatible distribution $p \in \Pi$ that satisfies SIA and that $p_\theta$ faithfully approximates.

In this case, conditional answer entropy under $p_\theta$ may decrease along a reasoning trace even as the model converges to an incorrect answer. Formally, such trajectories satisfy an \emph{internal stepwise informativeness} condition:
\(
I_{p_\theta}(A ; C_{\leq k} \mid Q) > 0,
\)
despite vanishing informativeness under the joint distribution $p$ induced by training. Entropy descent then reflects uncertainty reduction with respect to a misaligned belief state, providing an information-theoretic formalization of ``hallucinations'', common in adversarial, out-of-distribution, or weakly supervised settings. 

Lastly, it is worth noting the theory behind SIA is most applicable to problems with a well-defined terminal variable, such as mathematical reasoning or multiple-choice question answering, as opposed to free-form outputs like creative writing.

\begin{summarybox}
\textbf{Summary}: we have: (i) introduced SIA as a minimal, falsifiable condition and structural theory under which entropy-based reasoning analyses are justified, (ii) shown that MLE induces an internal form of SIA, (iii) used KL-continuity to justify transfer, and (iv) explicitly studied what training does not guarantee.
\end{summarybox}

%% file: Sec5_empirical_validation.tex
\section{Empirical validation}

In this section, we test whether training induces an answer-consistent $p\in\Pi$ that $p_\theta$ faithfully approximates, and under what conditions. Empirically, we do not directly verify SIA, which is a property of the joint coupling $p \in \Pi$. Instead, we evaluate the entropy dynamics predicted by SIA and ask whether training induces model behavior compatible with such a coupling.

We organize our empirical evaluation around three questions: (i) does conditional entropy descent align with increasing probability of the true answer, (ii) is this alignment induced and strengthened by training for reasoning, and (iii) what are the observable signatures and failure modes of SIA?

We evaluate eleven models across three datasets (GSM8K, ARC, SVAMP), spanning base, instruction-tuned, CoT-tuned and RL-trained regimes. All entropy quantities are estimated via Monte Carlo rollouts under stochastic decoding. Full evaluation details are provided in Appendix~\ref{app:exp}.

\subsection{Entropy-answer alignment}

If training has successfully aligned the model’s internal joint with a coupling $p\in\Pi$ that satisfies SIA, reductions in conditional answer entropy should coincide with increases in the probability assigned to the true answer. To test this directly, we define the following diagnostic.

\paragraph{SIA alignment coefficient.}
For each generated trace, we compute the correlation (across prefix steps~$k$) between conditional answer entropy and gold surprisal:
\begin{equation*}
\rho_{\mathrm{SIA}} \;:=\; \mathrm{corr}_{k}\!\Big(
H_{\theta}(A \mid q, c_{\le k}),
\;-\log p_{\theta}(a^{\star} \mid q, c_{\le k})
\Big).
\end{equation*}
Positive $\rho_{\mathrm{SIA}}$ indicates that uncertainty reduction is aligned with increasing probability of the correct answer, suggesting the internal entropy descent is compatible with an answer-consistent coupling in $\Pi$ that satisfies SIA. Negative values indicate confident misalignment: entropy decreases while moving away from the true answer.

Table~\ref{tab:sia_alignment_by_dataset} summarizes $\rho_{\mathrm{SIA}}$ by model. Base models frequently exhibit weak or negative alignment, whereas supervised fine-tuned models show strong positive alignment on average and RL-trained models approach near-perfect alignment. This indicates that truth-directed entropy descent is not a generic property of autoregressive models, but a training-induced structural feature.

Within each training stage, alignment varies with data curation and optimization objectives. Among base models, Qwen2.5-3B exhibits stronger alignment than Gemma-2 and LLaMA-3.2, probably due to a pretraining corpus richer in reasoning text. Within SFT models, DeepSeek-Chat underperforms, which may be caused by supervision that prioritizes conversational helpfulness. Finally, models explicitly optimized for reasoning, such as OLMo and DeepSeek-R1, exhibit near-perfect alignment, reflecting training regimes that strongly couple intermediate steps to the correct answer.

\begin{table}[hbtp!]
\caption{\textbf{Training aligns entropy descent with the true answer.}
We report the correlation between conditional answer entropy and gold surprisal along each trace
(SIA alignment coefficient $\rho_{\mathrm{SIA}}$), averaged by model and dataset.
\bad{Negative} or \weak{near-zero} values indicate failure of alignment.}
\centering
\small
\setlength{\tabcolsep}{4pt}
\begin{tabular}{lcccc}
\toprule
\textbf{Model} & \textbf{Training} &
\textbf{GSM8K} & \textbf{SVAMP} & \textbf{ARC} \\
\midrule
Qwen2.5-3B        & Base & 0.682 & 0.603 & 0.344 \\
Qwen2.5-3B-it     & SFT  & 0.744 & 0.835 & 0.666 \\
Qwen2.5-Math-1.5B & SFT  & 0.499 & 0.802 & 0.676 \\
DeepSeek-Chat-7B  & SFT  & 0.346 & 0.295 & 0.143 \\
DeepSeek-R1-Distilled & SFT+RL & 0.795 & 0.593 & 0.783 \\
Gemma-2-2B        & Base & \bad{-0.530} & 0.169 & \bad{-0.208} \\
Gemma-2-2B-it     & SFT  & 0.522 & 0.462 & 0.578 \\
LLaMA-3.2-3B      & Base & \bad{-0.361} & 0.424 & \bad{-0.366} \\
LLaMA-3.2-3B-it   & SFT  & 0.576 & 0.399 & 0.545 \\
Olmo-3-7B-Think-SFT & SFT & 0.964 & 0.884 & 0.960 \\
Olmo-3-7B-Think   & SFT+RL & 0.885 & 0.778 & 0.887 \\
\bottomrule
\end{tabular}
\label{tab:sia_alignment_by_dataset}
\end{table}

\subsection{Observable signatures: early lock-in, separability, and saturation}
\label{sec:signatures}

When training has successfully internalized SIA, it gives rise to observable token-level signatures (as often reported in the literature) that distinguish aligned from non-aligned models, and correct from incorrect traces.

\paragraph{Early information accumulation.}
Figure~\ref{fig:gain_correct_incorrect} plots a normalized version of the cumulative information gain (Definition~\ref{def:cumulativegain}),
\(
G(s):=\frac{I(A;C_{\le s}\mid Q)}{I(A;C_{\le 1}\mid Q)}
\)
split by correctness. Correct traces accumulate a larger fraction of their total answer-relevant information earlier in the generation. As predicted by Theorem~ \ref{th:entropy_accuracy}, prefixes with lower entropy are more likely to lead to correct answers. This signature is not observed in non-aligned models (see Appendix~\ref{subsec:signatures_weaken}).

\begin{figure}[hbtp!]
\centering
\includegraphics[width=0.75\linewidth]{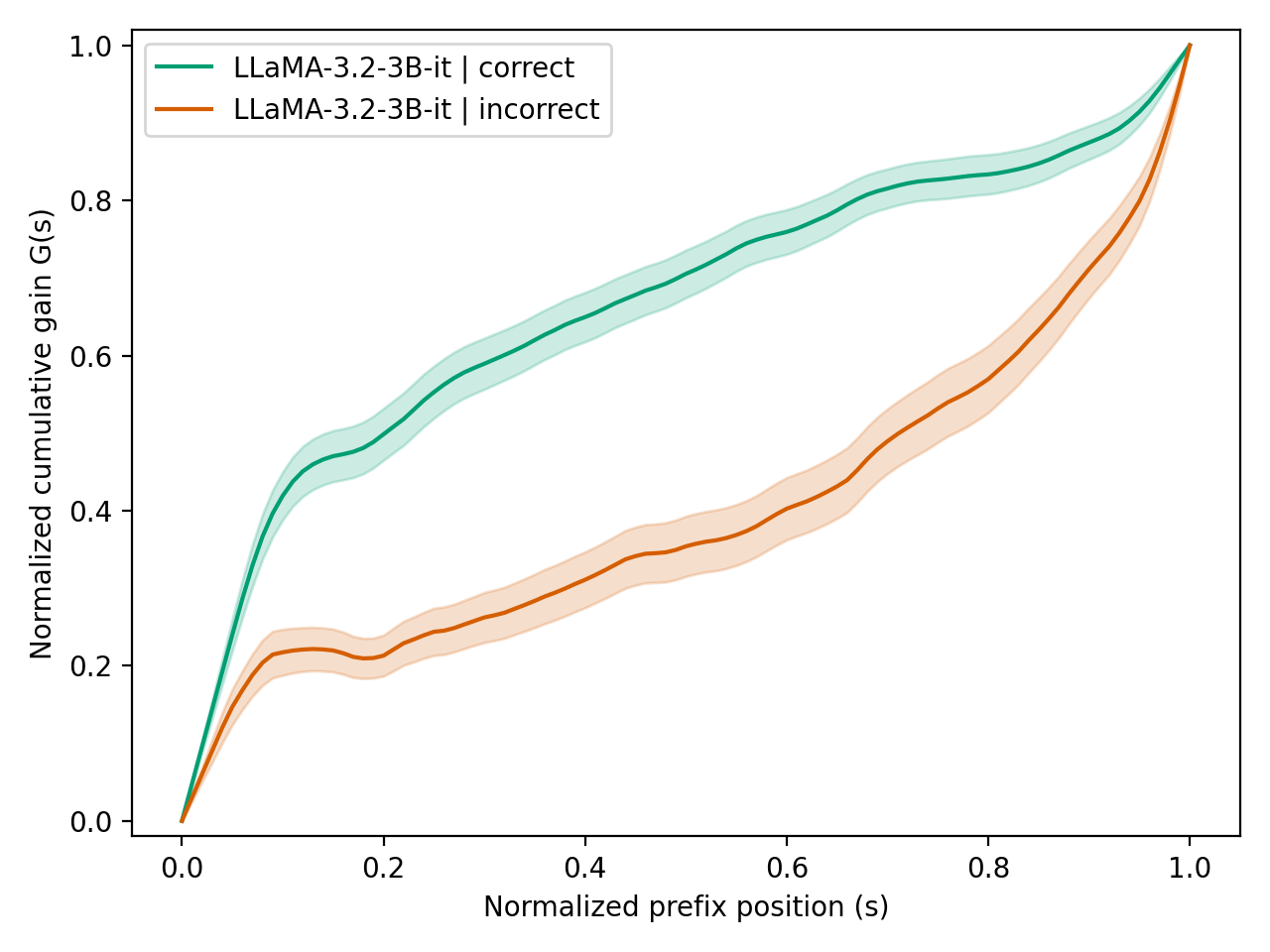}
\caption{\textbf{Early information accumulation.}
Normalized cumulative gain $G(s)$ vs.\ relative prefix length $s$, split by correctness in llama-3.2-3B-it (aligned model) in GSM8k dataset.}
\label{fig:gain_correct_incorrect}
\end{figure}

\paragraph{Early separability of correct vs.\ incorrect traces.}
Figure~\ref{fig:auc_by_r} reports the AUC for using conditional entropy at prefix length $s$ to distinguish correct from incorrect traces. For SIA-internalized models, separability is already strong well before the answer is produced, showing that entropy becomes diagnostic early in the trace. This signature is not observed in non-aligned models (see Appendix~\ref{subsec:signatures_weaken}).

\begin{figure}[hbtp!]
\centering
\includegraphics[width=0.75\linewidth]{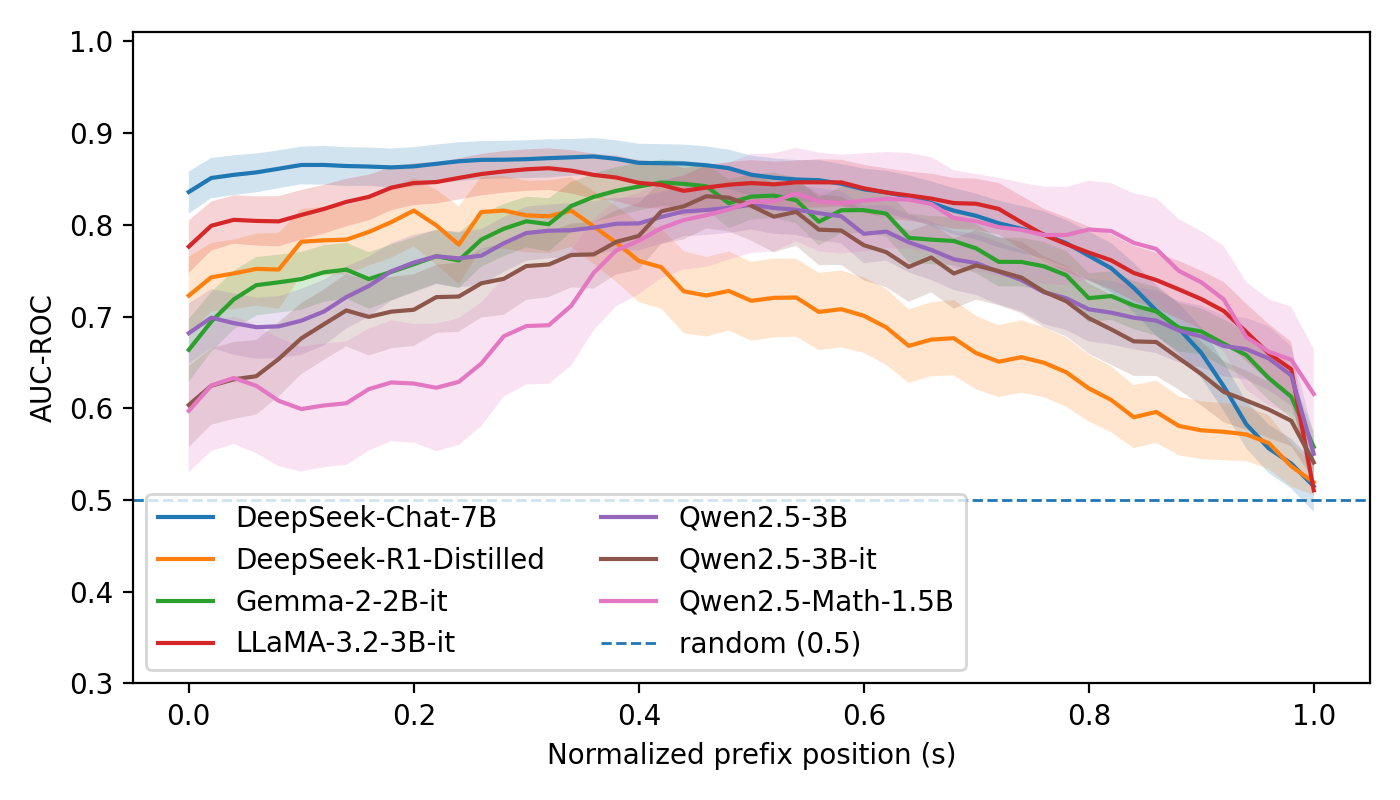}
\caption{\textbf{Separability.}
AUC for using conditional answer entropy to distinguish correct from incorrect traces vs.\ relative prefix length $s$ across aligned models in GSM8k dataset.}
\label{fig:auc_by_r}
\end{figure}

\paragraph{Saturation.}
Finally, Figure~\ref{fig:entropy_plateaus} shows mean entropy trajectories across model families. Aligned models reach plateaus at (near-)zero conditional answer entropy, consistent with exhausting answer-relevant information, while non-aligned models stabilize at nonzero entropy and exhibit late-stage rebounds, indicating that uncertainty ceases to decrease without converging to a specific answer.

\begin{figure}[hbtp!]
\centering
\includegraphics[width=0.75\linewidth]{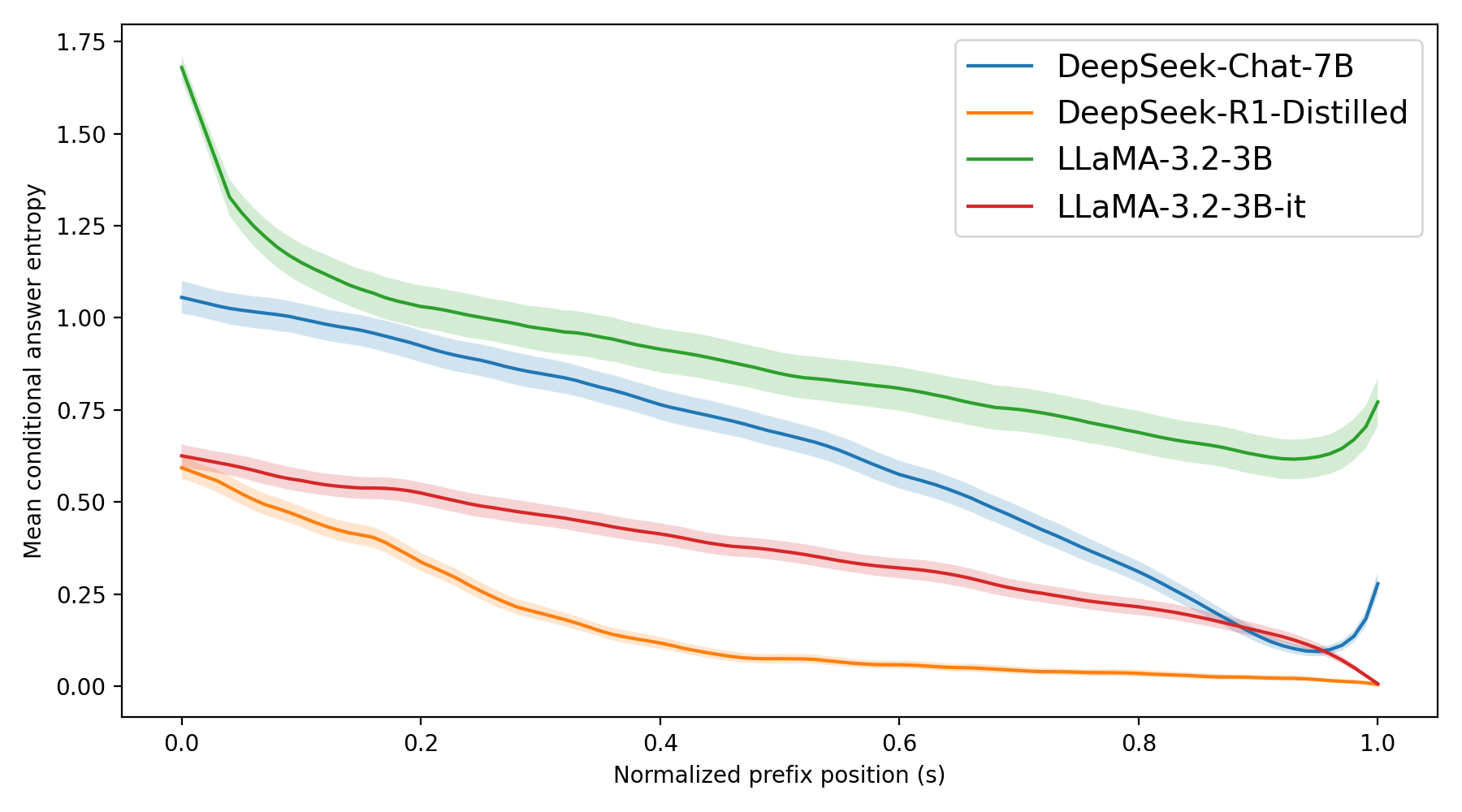}
\caption{\textbf{Saturation.}
Mean conditional answer entropy trajectories across non-aligned and aligned models in GSM8k dataset.}
\label{fig:entropy_plateaus}
\end{figure}

Together, these patterns characterize SIA-internalized reasoning: entropy both constrains achievable accuracy and reveals when and how answer-relevant information is acquired. Importantly, all signatures vanish or weaken when this structure is absent (see Appendix~\ref{subsec:signatures_weaken}). 

\subsection{Ablations}
\label{sec:ablations}

Finally, we test whether observed dynamics reflect stepwise structure rather than superficial artifacts.

\paragraph{Shuffle-prefix ablation (post-hoc).}
Table~\ref{fig:shuffle_ablation} shows that randomly permuting tokens within prefixes (length preserved) sharply degrades alignment, indicating that truth-directed entropy descent depends on structured accumulation rather than token count. This permutation is applied only at evaluation time when computing conditional answer distributions and the associated entropies, and does not affect generation.

\begin{table}[hbtp!]
\caption{\textbf{Shuffle-prefix ablation}. Entropy--correctness alignment ($\rho_{\text{SIA}}$) drops sharply when prefix tokens are permuted. \bad{Negative} or \weak{near-zero} values indicate coupling misalignment.}
\label{fig:shuffle_ablation}
\centering
\small
\begin{tabular}{lcc}
\toprule
\textbf{Model} & {Original mean} & {Shuffled mean} \\
\midrule
Qwen2.5-3B      &  0.682 & \bad{-0.132} \\
Qwen2.5-3B-it   &  0.744 & \bad{-0.005} \\
DeepSeek-R1-Distilled    &  0.795 &  \weak{0.020} \\
Gemma-2-2B-it   &  0.522 & \bad{-0.063} \\
\bottomrule
\end{tabular}
\end{table}

Further ablations can be found in Appendix~\ref{subsec:further_ablations}.

%% file: Sec6_future_work.tex
\section{Conclusion and Open Questions}
\label{sec:open_quest}

This work provides a structural explanation for why internal entropy dynamics correlate with correctness in autoregressive reasoning models. In particular, we have proposed SIA, which links conditional answer entropy to the accumulation of answer-relevant information. SIA is not intended as a surprising claim; rather, it isolates the minimal structural condition under which entropy-based reasoning methods are theoretically justified, a condition that many empirical approaches in the literature implicitly rely on. Additionally, through a suite of experiments, we have verified that standard training pipelines induce model behavior consistent with SIA. We further found that correct reasoning traces exhibit characteristic entropy signatures that distinguish them from traces leading to incorrect answers with respect to the ground-truth distribution.

Lastly, some open questions remain. Entropy-based diagnostics may fail in regimes where reasoning-trace prefixes are only weakly informative about the true answer: characterizing the distributions that produce such behavior would clarify the limits of entropy as a proxy for reasoning. Also, it remains open whether targeted interventions that modify entropy dynamics can reliably change reasoning outcomes. Finally, an important direction is to generalize entropy-based diagnostics to other modalities and generative modeling paradigms.

\section*{Acknowledgements}
Mar Gonzàlez I Català acknowledges that this project was supported by G-Research.

\section*{Impact Statement}
This paper aims to advance the field of Machine Learning. While our work has potential societal implications, we do not identify any specific concerns that require particular emphasis at this stage.

%% file: AppA_experimental_setup.tex
\appendix
\onecolumn
\section{Experimental setup and evaluation protocol}
\label{app:exp}

\subsection{Evaluation protocol}
\label{subsec:eval_protocol}

\subsubsection{Tasks and datasets}
\label{subsec:tasks_datasets}

We focus on reasoning tasks with a discrete answer space \(\mathcal{A}\), which enables empirical estimation of conditional answer entropy. Each example consists of a question \(Q \in \mathcal{Q}\) and a ground-truth answer \(A \in \mathcal{A}\). We evaluate on the following datasets:

\begin{itemize}
\item \textbf{GSM8K} \cite{cobbe2021trainingverifierssolvemath}: grade-school mathematical word problems with numeric answers.
\item \textbf{ARC} \cite{clark2018thinksolvedquestionanswering}: multiple-choice science questions.
\item \textbf{SVAMP} \cite{patel2021nlpmodelsreallyable}: arithmetic word problems designed to test robustness to linguistic variation.
\end{itemize}

For all datasets, we use the official test splits and apply deterministic answer normalization and parsing to map model outputs to discrete answer labels (e.g., numeric normalization for GSM8K and SVAMP, letter-to-option mapping for ARC). Invalid or unparsable outputs are mapped to a special null answer category.

\subsubsection{Models}
\label{subsec:models}

We evaluate a diverse set of open-weight LLMs corresponding to different training regimes:

\begin{itemize}
\item \textbf{Gemma-2-2B} \cite{gemmateam2024gemma2improvingopen}: base and instruction-tuned variants.
\item \textbf{LLaMA-3.2-3B} \cite{grattafiori2024llama3herdmodels}: base and instruction-tuned variants.
\item \textbf{Qwen-2.5-3B} \cite{qwen2025qwen25technicalreport}: base and instruction-tuned variants.
\item \textbf{Qwen-2.5-Math-1.5B} \cite{qwen2025qwen25technicalreport}: SFT-trained specialized on math problems.
\item \textbf{DeepSeek-Chat-7B} \cite{deepseekai2024deepseekllmscalingopensource}: SFT-trained chat model.
\item \textbf{DeepSeek-R1-distilled-7B} \cite{Guo_2025}: reasoning-specialized RL model.
\item \textbf{Olmo-3-7B-Think} \cite{olmo2025olmo3}: SFT and RL-trained variants. 
\end{itemize}

Base models correspond to pretrained LLMs without supervised or reinforcement fine-tuning. Instruction-tuned (IT) models are supervised fine-tuned on instruction-following data. RL-trained models are optimized using reinforcement learning from human or synthetic feedback.

All models are evaluated using their publicly released checkpoints with default tokenizers and architectures.

\subsubsection{Generation procedure}
\label{subsec:generation_procedure}

For each question \(Q=q\), we sample \(M\) independent reasoning trajectories from the model under a fixed stochastic decoding configuration (temperature, nucleus sampling, and maximum generation length). Concretely, for each \(i \in \{1,\dots,M\}\) we draw
\[
C^{(i)}_{1:K^{(i)}} \sim p_\theta(\cdot \mid q),
\]
where \(K^{(i)}\) denotes the generated reasoning length (up to a fixed truncation limit). We treat each sampled trajectory \(C^{(i)}_{1:K^{(i)}}\) as one realization of the model's reasoning process for the given query.

Unless otherwise specified, decoding uses:
\begin{itemize}
    \item temperature $T=0.7$
    \item nucleus sampling with $p=0.9$
    \item a maximum generation length of 600 tokens
\end{itemize}

Each trajectory is treated as one realization of the model’s reasoning process for the given query. All rollouts used for entropy estimation use the same decoding configuration to ensure comparability.

\subsubsection{Monte-Carlo estimation of conditional answer entropy}
\label{subsubsec:mc_conditional_entropy}

Given a fixed query \(Q = q\) and a realized reasoning prefix \(C_{1:k} = c_{1:k}\), the model induces an implicit distribution over final answers
\[
p_\theta(A \mid q, c_{1:k}),
\]
We approximate \(H_{p_\theta}(A \mid q, c_{1:k})\) using Monte-Carlo sampling. For a fixed prefix \((q, c_{1:k})\), we draw \(N\) independent stochastic rollouts from the model:
\[
A^{(i)} \sim p_\theta(\cdot \mid q, c_{1:k}), \qquad i = 1, \dots, N.
\]
using the same decoding parameters as the base generation, followed by deterministic answer extraction.

These samples induce an empirical distribution
\[
\hat{p}_k(a \mid q, c_{1:k})
= \frac{1}{N} \sum_{i=1}^N \mathbf{1}\{A^{(i)} = a\}.
\]
and the plug-in estimator
\[
\widehat{H}_{p_\theta}(A \mid q, c_{1:k})
= -\mkern-46mu \sum_{a \in \mathcal{A} : \hat{p}_k(a \mid q, c_{1:k}) > 0}
\mkern-46mu 
\hat{p}_k(a \mid q, c_{1:k})
\log \hat{p}_k(a \mid q, c_{1:k}).
\]

This estimator is biased for finite \(N\) but consistent as \(N \to \infty\), and sufficient for comparing entropy trends across token positions and training regimes.

All rollouts are performed in evaluation mode, without gradient computation. Sampling parameters are held fixed across models and prefixes. In practice, we use $N=16$ continuations per prefix unless otherwise stated. For an ablation using $N=32$ continuations, see Appendix~\ref{subsec:further_ablations}.

\subsubsection{Checkpointed prefix evaluation}
\label{subsec:checkpointed_prefix_evaluation}

Estimating conditional answer entropy at every token is computationally expensive. We therefore evaluate at checkpoint positions
\[
\mathcal{K} = \{k_1, k_2, \dots, k_m\} \subseteq \{0,1,\dots,K\},
\]
spaced uniformly at stride $s$, and always including the final prefix length of the trajectory (\(k_m = K\)). Here \(k=0\) corresponds to the empty prefix.

For each \(k \in \mathcal{K}\), we compute \(\widehat{H}_{p_\theta}(A \mid Q, C_{\le k})\) independently. When needed for visualization, we linearly interpolate entropy values between checkpoints, but all reported quantitative results are computed on \(\mathcal{K}\).

\subsubsection{Statistical reporting}
\label{subsec:statistics}

Unless otherwise specified, all reported curves show the mean across questions, with shaded regions denoting 95\% bootstrap confidence intervals computed over questions. For metrics such as AUC or average information gain, we report both mean and standard error.

%% file: AppB_further_results.tex
\onecolumn
\section{Further results}
\label{app:further_results}

\subsection{Signatures vanish or weaken in non-aligned models}
\label{subsec:signatures_weaken}

This appendix supports the claims made in Section~\ref{sec:signatures} that the observable signatures of Stepwise Informativeness (SIA) are specific to aligned models and either vanish or
significantly weaken in non-aligned ones.

\paragraph{Failure of early information accumulation.}
Figure~\ref{fig:nonaligned-earlyinfo} reports the normalized cumulative information gain
$G(s)$ for non-aligned models, split by correctness. Unlike aligned models (Figure~\ref{fig:auc_by_r}), correct traces do not exhibit systematically earlier or steeper accumulation of answer-relevant information. The two curves largely overlap, indicating the absence of early lock-in behavior.

\begin{figure}[hbtp!]
  \centering
  \includegraphics[width=0.55\linewidth]{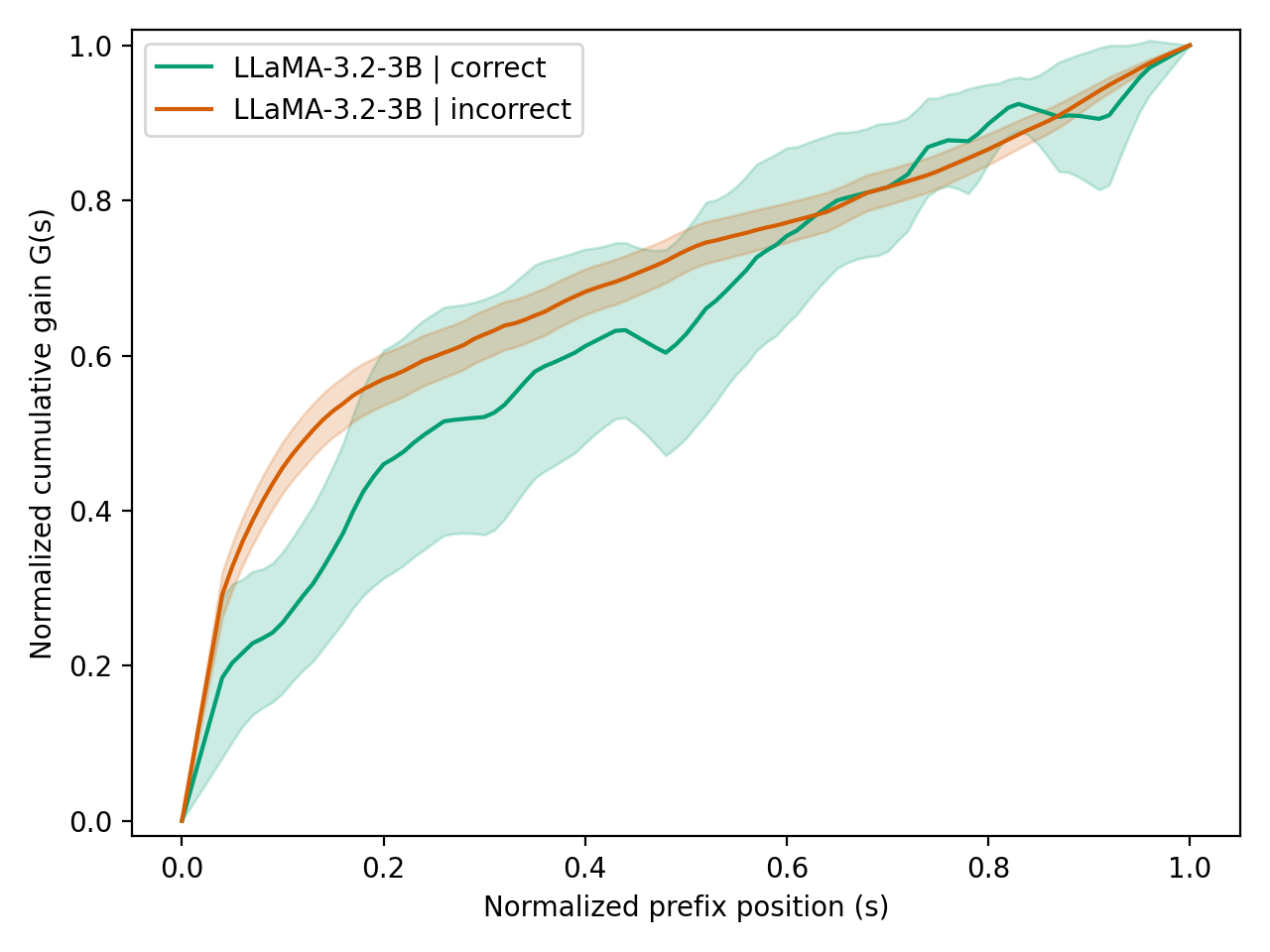}
  \caption{\textbf{Early information accumulation in non-aligned models.}
Normalized cumulative gain $G(s)$ vs.\ relative prefix length $s$, split by correctness in llama-3.2-3B (non-aligned model) in GSM8k dataset. Entropy is not a correctness signal in this regime.}
  \label{fig:nonaligned-earlyinfo}
\end{figure}

\paragraph{Failure of early separability.}
Figure~\ref{fig:nonaligned-separability} shows the AUC obtained when using conditional
answer entropy at prefix length $s$ to distinguish correct from incorrect traces.
For non-aligned models, separability remains weak across the entire generation and does
not rise sharply at small $s$, in contrast with the behavior observed in aligned models
(Figure~\ref{fig:auc_by_r}).

\begin{figure}[hbtp!]
  \centering
  \includegraphics[width=0.55\linewidth]{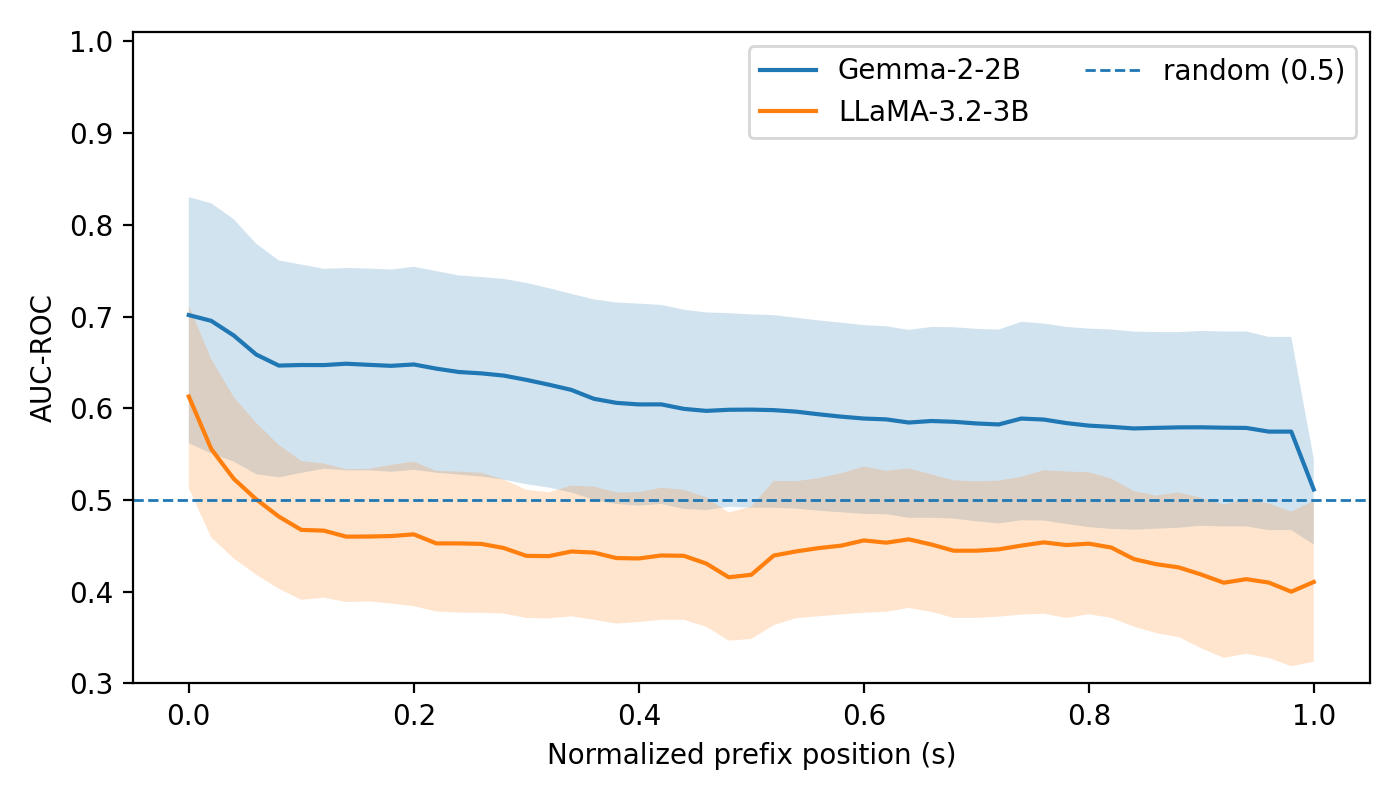}
  \caption{\textbf{Separability in non-aligned models}.
  AUC for using conditional answer entropy to distinguish correct from incorrect traces vs.\ relative prefix length $s$ across non-aligned models in GSM8k dataset. Entropy is not an early diagnostic signal in this regime.}
  \label{fig:nonaligned-separability}
\end{figure}

Together, these results confirm that the empirical signatures described in Section~\ref{sec:signatures}
are not generic properties of autoregressive models, but arise specifically when training
induces stepwise informativeness.

\subsection{Further ablations}
\label{subsec:further_ablations}

\paragraph{Monte-Carlo approximation.}
Our entropy estimates rely on Monte-Carlo rollouts. To assess robustness to approximation quality, we reran a subset of experiments using a coarser estimator with stride $4$ and $32$ samples, on $100$ GSM8K instances across a subset of models. Table~\ref{tab:mc_ablation} reproduces a subset of Table~\ref{tab:sia_alignment_by_dataset} under this setting. Results remain qualitatively unchanged, indicating that SIA alignment is not an artifact of low-fidelity Monte-Carlo estimation.

\begin{table}[hbtp!]
\centering
\caption{Monte-Carlo ablation on GSM8K (stride $4$, MC=$32$, $100$ samples).}
\label{tab:mc_ablation}
\small
\begin{tabular}{lcc}
\toprule
\textbf{Model} & {Original mean} & {Ablated mean} \\
\midrule
Qwen2.5-3B      &  0.682 & 0.635 \\
Qwen2.5-3B-it   &  0.744 & 0.831 \\
DeepSeek-R1-Distilled    &  0.795 &  0.711 \\
Gemma-2-2B-it & 0.522 & 0.506 \\
\bottomrule
\end{tabular}
\end{table}

%% file: AppC_proofs.tex
\onecolumn
\section{Proofs}
\label{app:proofs}

\begin{proof}[Proof of Lemma~\ref{cl:exp_delta_is_I}]
The expectation of $\Delta_k$ expands as
\begin{equation*}
\mathbb{E}[\Delta_k] = \sum_{q,a,c_{1:k}} p(q,a,c_{1:k})\, \log\frac{p(a \mid q,c_{\le k})}{p(a \mid q,c_{<k})}.
\end{equation*}

To make the relationship with conditional mutual information explicit, we separate the prefix $c_{<k}$ from the current token $c_k$:
\begin{equation*}
\mathbb{E}[\Delta_k]
=
\sum_{q,c_{<k},c_k,a}
p(q,c_{<k},c_k,a)\,
\log\frac{p(a \mid q,c_{<k},c_k)}{p(a \mid q,c_{<k})}.
\end{equation*}

We can rewrite the expectation in terms of the conditional distribution of $(A,C_k)$ given $(Q,C_{<k})$:
\begin{equation*}
\mathbb{E}[\Delta_k] = \sum_{q,c_{<k}} p(q,c_{<k}) \sum_{c_k,a} p(a,c_k \mid q,c_{<k}) \log\frac{p(a \mid q,c_{<k},c_k)}{p(a \mid q,c_{<k})}.
\end{equation*}

Using the factorization
\begin{equation*}
p(a,c_k \mid q,c_{<k})
= p(a \mid q,c_k,c_{<k})\, p(c_k \mid q,c_{<k}),
\end{equation*}
we recognize that by rewriting the logarithm inside the sum we obtain exactly the definition of the conditional mutual information:
\begin{equation*}
\mathbb{E}[\Delta_k] = \sum_{q,c_{<k}} p(q,c_{<k}) \sum_{c_k,a} p(a,c_k \mid q,c_{<k}) \log\frac{p(a,c_k \mid q,c_{<k})}{p(a \mid q,c_{<k})\,p(c_k \mid q,c_{<k})} = I(A;C_k \mid Q,C_{<k}).
\end{equation*}

Also, we can express the mutual information in terms of entropy. 
\begin{equation*}
I(A;C_k \mid Q,C_{<k}) = \sum_{q,c_{<k},c_k,a}
p(q,c_{<k},c_k,a)\,\log\frac{p(a,c_k \mid q,c_{<k})}{p(a \mid q,c_{<k})\,p(c_k \mid q,c_{<k})}
\end{equation*}

\begin{equation*}
= \sum_{q,c_{<k},c_k,a}
p(q,c_{<k},c_k,a)\,\log\frac{p(a\mid q,c_{<k},c_k)p(c_k \mid q,c_{<k})}{p(a \mid q,c_{<k})\,p(c_k \mid q,c_{<k})}
\end{equation*}
\begin{equation*}
= \sum_{q,c_{<k},c_k,a}
p(q,c_{<k},c_k,a)\,\big(\log p(a\mid q,c_{<k},c_k) - \log p(a \mid q,c_{<k})\big)
\end{equation*}

\begin{equation*}
= - \sum_{q,c_{<k},c_k,a}
p(q,c_{<k},c_k,a)\log p(a \mid q,c_{<k}) +\sum_{q,c_{<k},c_k,a}
p(q,c_{<k},c_k,a)\,\log p(a\mid q,c_{\le k}) 
\end{equation*}

Next, consider the first term

\begin{equation*}
-\sum_{q,c_{<k},c_k,a} p(q,c_{<k},c_k,a) \log p(a \mid q,c_{<k}).
\end{equation*}

Notice that the probability $p(a \mid q,c_{<k})$ does not depend on $c_k$. Therefore we can rewrite the sum as

\begin{equation*}
-\sum_{q,c_{<k},a} \log p(a \mid q,c_{<k}) \left( \sum_{c_k} p(q,c_{<k},c_k,a) \right).
\end{equation*}

The inner sum is simply the marginal probability obtained by summing over $c_k$:

\begin{equation*}
\sum_{c_k} p(q,c_{<k},c_k,a) = p(q,c_{<k},a).
\end{equation*}

Substituting this back in gives

\begin{equation*}
H(A \mid Q, C_{<k}):=-\sum_{q,c_{<k},a} p(q,c_{<k},a) \log p(a \mid q,c_{<k}).
\end{equation*}

Hence, 
\begin{equation*}
I(A;C_k \mid Q,C_{<k}) = H(A \mid Q,C_{<k}) - H(A \mid Q,C_{\le k}),
\end{equation*}

and we arrive at the compact form
\begin{equation*}
\mathbb{E}[\Delta_k] = I(A;C_k \mid Q,C_{<k}) = H(A\mid Q,C_{<k}) - H(A\mid Q,C_{\le k}).
\end{equation*}
\end{proof}

\begin{proof}[Proof of Proposition~\ref{prop:entropy-information}]
For each $t \ge 1$, the conditional mutual information admits the standard entropy decomposition
\[
I(A; C_t \mid Q, C_{<t})
=
H(A \mid Q, C_{<t}) - H(A \mid Q, C_{\le t}).
\]
Summing over $t = 1,\dots,k$ yields a telescoping series:
\begin{align*}
\sum_{t=1}^k I(A; C_t \mid Q, C_{<t})
&=
\sum_{t=1}^k \bigl[ H(A \mid Q, C_{<t}) - H(A \mid Q, C_{\le t}) \bigr] \\
&=
H(A \mid Q) - H(A \mid Q, C_{1:k}),
\end{align*}
which establishes the first identity.

The second identity follows from the chain rule for conditional mutual information, which states that
\[
I(A; C_{1:k} \mid Q)
=
\sum_{t=1}^k I(A; C_t \mid Q, C_{<t}).
\]
Combining the two expressions completes the proof.
\end{proof}

\begin{proof}[Proof of Theorem~\ref{th:entropy_accuracy}]
Consider the pair of random variables
\[
A \in \mathcal{A} \qquad Y := (Q, C_{1:k})
\]
with $|\mathcal A|\ge 2$. Let $\hat A(Y)$ be the Bayes-optimal (MAP) classifier under $p(A\mid Y)$ and denote
\[
P_e(Y) := \Pr(\hat A(Y)\neq A).
\]
Fano's inequality \cite{Cover2006} states that 
\[
H(A\mid Y) \le \log 2 + P_e(Y)\log(|\mathcal A|-1),
\]
which rearranges to
\[
P_e(Y) \ge \frac{H(A\mid Y)-\log 2}{\log(|\mathcal A|-1)}.
\]
Substituting $Y=(Q,C_{1:k})$ yields
\[
P_e^{(k)} \ge \frac{H(A\mid Q,C_{1:k})-\log 2}{\log(|\mathcal A|-1)}.
\]
\end{proof}

\begin{proof}[Proof of Lemma~\ref{lemma:log-likelihood_identity}]
By definition of expectation under $r$, we have
\[
\mathcal{L}(\theta)
= \mathbb{E}_{X\sim r}[-\log p_\theta(X)]
= -\sum_{x} r(x)\log p_\theta(x).
\]
We now add and subtract $\sum_{x} r(x)\log r(x)$, which equals zero:
\[
\mathcal{L}(\theta)
= -\sum_{x} r(x)\log p_\theta(x)
  + \sum_{x} r(x)\log r(x)
  - \sum_{x} r(x)\log r(x).
\]
Rearranging the terms gives
\[
\mathcal{L}(\theta)
= -\sum_{x} r(x)\log r(x)
  + \sum_{x} r(x)\log\frac{r(x)}{p_\theta(x)}.
\]
The first term is the Shannon entropy
\[
H(r) = -\sum_{x} r(x)\log r(x),
\]
which depends only on the data-generating distribution $r$ and not on $\theta$. It represents the irreducible uncertainty of the data source. In natural language, this idea goes back to Shannon’s analysis of the entropy of printed English \citep{6773263}, and in modern language modeling manifests as a non-zero lower bound on achievable cross-entropy or perplexity, as observed in empirical scaling laws \citep{kaplan2020scalinglawsneurallanguage}. 

The second term is the forward Kullback--Leibler divergence
\[
\mathrm{KL}(r \,\Vert\, p_\theta)
= \sum_{x} r(x)\log \frac{r(x)}{p_\theta(x)}.
\]
Thus we obtain the exact decomposition
\[
\mathcal{L}(\theta)
= H(r) + \mathrm{KL}(r \,\Vert\, p_\theta).
\]

Since $H(r)$ is constant with respect to~$\theta$, minimizing $\mathcal{L}(\theta)$ is equivalent to minimizing $\mathrm{KL}(r \,\Vert\, p_\theta)$.   Therefore, any sequence of parameters $\theta$ that decreases the negative log-likelihood necessarily drives the model distribution $p_\theta$ toward the data distribution $r$ in Kullback--Leibler divergence.  This establishes that $p_\theta \approx r$ whenever $\mathcal{L}(\theta)$ is near its minimum.
\end{proof}

\begin{proof}[Proof of Lemma~\ref{lemma:KLDecompositionoftheJointConditional}]
Using the chain rule of probability, both distributions factorize as
\[
r(C_{1:K},A\mid Q)
= r(C_{1:K}\mid Q)\, r(A\mid Q,C_{1:K}),
\qquad
p_\theta(C_{1:K},A\mid Q)
= p_\theta(C_{1:K}\mid Q)\, p_\theta(A\mid Q,C_{1:K}).
\]
Hence the KL divergence expands to
\[
\mathrm{KL}
= 
\mathbb{E}_{r(C_{1:K},A\mid Q)}
\left[
\log \frac{
r(C_{1:K}\mid Q)\, r(A\mid Q,C_{1:K})
}{
p_\theta(C_{1:K}\mid Q)\, p_\theta(A\mid Q,C_{1:K})
}
\right].
\]
Splitting the logarithm into two terms yields
\[
\mathrm{KL}
=
\mathbb{E}_{r(C_{1:K},A\mid Q)}
\!\left[
\log\frac{r(C_{1:K}\mid Q)}{p_\theta(C_{1:K}\mid Q)}
\right]
+
\mathbb{E}_{r(C_{1:K},A\mid Q)}
\!\left[
\log\frac{r(A\mid Q,C_{1:K})}{p_\theta(A\mid Q,C_{1:K})}
\right].
\]
In the first term the integrand depends only on $C_{1:K}$, so the outer expectation reduces to
$\mathbb{E}_{r(C_{1:K}\mid Q)}$, giving
\[
\mathbb{E}_{r(C_{1:K}\mid Q)}
\left[
\log\frac{r(C_{1:K}\mid Q)}{p_\theta(C_{1:K}\mid Q)}
\right]
= 
\mathrm{KL}\!\left(r(C_{1:K}\mid Q)\,\Vert\,p_\theta(C_{1:K}\mid Q)\right).
\]
For the second term, conditioning on $C_{1:K}$ gives
\[
\mathbb{E}_{r(C_{1:K}\mid Q)}
\!\Big[
\mathbb{E}_{r(A\mid Q,C_{1:K})}
\left[
\log\frac{r(A\mid Q,C_{1:K})}{p_\theta(A\mid Q,C_{1:K})}
\right]
\Big]
=
\mathbb{E}_{r(C_{1:K}\mid Q)}
\!\left[
\mathrm{KL}\!\left(r(A\mid Q,C_{1:K}) \,\Vert\, p_\theta(A\mid Q,C_{1:K})\right)
\right].
\]
Combining both parts yields the claimed identity.
\end{proof}

\begin{proof}[Proof of Lemma~\ref{lemma:MLE_implies_alignment}]
By the decomposition in Lemma~\ref{lemma:joint-kl-decomposition},
the joint KL is a sum of two nonnegative terms:
\[
\mathrm{KL}(r\Vert p_\theta)
=
\underbrace{
\mathrm{KL}\!\left(r(C_{1:K}\mid Q)\Vert p_\theta(C_{1:K}\mid Q)\right)
}_{\text{marginal term}}
+
\underbrace{
\mathbb{E}_{r(C_{1:K}\mid Q)} 
\left[
\mathrm{KL}\!\left(r(A\mid Q,C_{1:K})\Vert p_\theta(A\mid Q,C_{1:K})\right)
\right]
}_{\text{conditional term}}.
\]
Thus, if the sum is bounded by $\delta$, then each individual term must also be bounded by
$\delta$:
\[
\mathrm{KL}\!\left(
r(C_{1:K}\mid Q)
\Vert
p_\theta(C_{1:K}\mid Q)
\right)
\le \delta,
\]
and
\[
\mathbb{E}_{r(C_{1:K}\mid Q)}
\!\left[
\mathrm{KL}\!\left(
r(A\mid Q,C_{1:K})
\Vert
p_\theta(A\mid Q,C_{1:K})
\right)
\right]
\le \delta.
\]

This establishes both claims.
\end{proof}

\begin{proof}[Proof of Lemma~\ref{lemma:entropy-continuity}]
Let $\|\cdot\|_{\mathrm{TV}}$ denote total variation distance,
\[
\|P - Q\|_{\mathrm{TV}} := \frac12 \sum_{x \in \mathcal{X}} |P(x) - Q(x)|.
\]
By Pinsker's inequality,
\[
\|P - Q\|_{\mathrm{TV}} \le \sqrt{\frac12 \mathrm{KL}(P \Vert Q)}
\le \sqrt{\frac{\delta}{2}}.
\]
Let $\varepsilon := \|P-Q\|_{\mathrm{TV}}$. The Fannes--Audenaert inequality \cite{Audenaert_2007} (continuity of entropy on a finite alphabet) states that
for $\varepsilon \le 1 - \tfrac{1}{|\mathcal{X}|}$,
\[
\bigl|H(P) - H(Q)\bigr|
\le \varepsilon \log(|\mathcal{X}| - 1) + h_2(\varepsilon),
\]
where $h_2(\varepsilon) := -\varepsilon \log \varepsilon
- (1-\varepsilon)\log(1-\varepsilon)$ is the binary entropy function.

Combining these two inequalities, for all $\delta > 0$ such that
$\sqrt{\delta/2} \le 1 - 1/|\mathcal{X}|$ we obtain
\[
\bigl|H(P) - H(Q)\bigr|
\le f_\mathcal{X}(\delta),
\]
where one admissible choice is
\[
f_\mathcal{X}(\delta)
:= \sqrt{\frac{\delta}{2}} \log(|\mathcal{X}| - 1)
+ h_2\!\Bigl(\sqrt{\tfrac{\delta}{2}}\Bigr).
\]
The function $f_\mathcal{X}$ is continuous and satisfies
$f_\mathcal{X}(\delta) \to 0$ as $\delta \to 0$, because both
terms on the right-hand side vanish in this limit.

Finally, the $\varepsilon$--$\delta$ formulation follows by continuity:
for any fixed $\varepsilon > 0$, choose $\delta > 0$ such that
$f_\mathcal{X}(\delta) \le \varepsilon$.
\end{proof}

\begin{proof}[Proof of Lemma~\ref{lemma:cond-entropy-continuity}]
Recall that conditional entropy can be expressed in terms of joint
entropies:
\[
H_P(Y \mid X) = H_P(X,Y) - H_P(X), \qquad
H_Q(Y \mid X) = H_Q(X,Y) - H_Q(X).
\]
Therefore
\begin{align*}
\bigl| H_P(Y \mid X) - H_Q(Y \mid X) \bigr|
&= \bigl| \bigl(H_P(X,Y) - H_Q(X,Y)\bigr)
- \bigl(H_P(X) - H_Q(X)\bigr) \bigr| \\
&\le \bigl|H_P(X,Y) - H_Q(X,Y)\bigr|
+ \bigl|H_P(X) - H_Q(X)\bigr|.
\end{align*}

Let $P_{XY}$ and $Q_{XY}$ denote the joint distributions on
$\mathcal{X}\times\mathcal{Y}$, and $P_X$, $Q_X$ the corresponding
marginals on $\mathcal{X}$. Since marginalization cannot increase
KL divergence, we have
\[
\mathrm{KL}(P_X \Vert Q_X)
\le \mathrm{KL}(P_{XY} \Vert Q_{XY})
= \mathrm{KL}(P\Vert Q) \le \delta.
\]
Applying Lemma~\ref{lemma:entropy-continuity} first to
$(P_{XY},Q_{XY})$ on the alphabet $\mathcal{X}\times\mathcal{Y}$ and
then to $(P_X,Q_X)$ on the alphabet $\mathcal{X}$ yields
\[
\bigl|H_P(X,Y) - H_Q(X,Y)\bigr|
\le f_{\mathcal{X}\times\mathcal{Y}}(\delta), \qquad
\bigl|H_P(X) - H_Q(X)\bigr|
\le f_{\mathcal{X}}(\delta).
\]
Combining the bounds, we obtain
\[
\bigl| H_P(Y \mid X) - H_Q(Y \mid X) \bigr|
\le f_{\mathcal{X}\times\mathcal{Y}}(\delta)
+ f_{\mathcal{X}}(\delta)
=: g_{\mathcal{X},\mathcal{Y}}(\delta),
\]
where $g_{\mathcal{X},\mathcal{Y}}(\delta)\to 0$ as $\delta\to 0$
because each $f$ has this property.
The $\varepsilon$--$\delta$ formulation follows as before.
\end{proof}

\begin{proof}[Proof of Lemma~\ref{lemma:mi-continuity}]
We work with finite alphabets, so all random variables take values in finite sets. Recall the standard entropy identity
\[
I(A; C_{\le k} \mid Q)
=
H(A \mid Q)
-
H(A \mid Q, C_{\le k}).
\]
For the distribution $r$ we have
\[
I_r(A; C_{\le k} \mid Q)
=
H_r(A \mid Q)
-
H_r(A \mid Q, C_{\le k}),
\]
and similarly for $p_\theta$:
\[
I_{p_\theta}(A; C_{\le k} \mid Q)
=
H_{p_\theta}(A \mid Q)
-
H_{p_\theta}(A \mid Q, C_{\le k}).
\]
Subtracting the two expressions gives
\begin{align*}
&\bigl|
I_r(A; C_{\le k} \mid Q)
-
I_{p_\theta}(A; C_{\le k} \mid Q)
\bigr| \\
&\qquad\le
\bigl|
H_r(A \mid Q)
-
H_{p_\theta}(A \mid Q)
\bigr|
+
\bigl|
H_r(A \mid Q, C_{\le k})
-
H_{p_\theta}(A \mid Q, C_{\le k})
\bigr|.
\end{align*}

We now apply Lemma~\ref{lemma:cond-entropy-continuity} twice, once
with $(X,Y) = (Q,A)$ and once with $(X,Y) = (Q,C_{\le k},A)$. Since marginalization cannot increase KL divergence, we have
\[
\mathrm{KL}\bigl(r(Q,A)
\Vert
p_\theta(Q,A)
\bigr)
\le \mathrm{KL}(r\Vert p_\theta)
\le \delta,
\]
and similarly for the joint $(Q,C_{\le k},A)$. Thus, Lemma
\ref{lemma:cond-entropy-continuity} yields functions
$g^{(<k)}(\delta)$ and $g^{(\le k)}(\delta)$, each vanishing as
$\delta\to 0$, such that
\[
\bigl|
H_r(A \mid Q)
-
H_{p_\theta}(A \mid Q)
\bigr|
\le g^{(<k)}(\delta),
\]
and
\[
\bigl|
H_r(A \mid Q, C_{\le k})
-
H_{p_\theta}(A \mid Q, C_{\le k})
\bigr|
\le g^{(\le k)}(\delta).
\]
Combining these inequalities gives
\[
\bigl|
I_r(A; C_{\le k} \mid Q)
-
I_{p_\theta}(A; C_{\le k} \mid Q)
\bigr|
\le
g^{(<k)}(\delta) + g^{(\le k)}(\delta)
=: G_k(\delta),
\]
where $G_k(\delta)\to 0$ as $\delta\to 0$. The
$\varepsilon$--$\delta$ formulation is immediate.
\end{proof}

\begin{proof}[Proof of Theorem~\ref{thm:transfer-sia}]
Given a step $k \geq 1$, we have
\[
I_r(A; C_{\leq k} \mid Q) > 0.
\]
Then there exists $\varepsilon_k > 0$ such that
\[
I_r(A; C_{\leq k} \mid Q) \ge \varepsilon_k > 0.
\]
Lemma~\ref{lemma:mi-continuity} (continuity of conditional mutual information) states that if 
\[
\mathrm{KL}(r \Vert p_\theta) \le \delta,
\]
then there exists a function $G_k(\delta)$ with $G_k(\delta) \to 0$ as $\delta \to 0$ such that
\[
\bigl| I_r(A; C_{\le k} \mid Q) - I_{p_\theta}(A; C_{\le k} \mid Q) \bigr|
\;\le\; G_k(\delta).
\]
Equivalently,
\[
I_{p_\theta}(A; C_{\le k} \mid Q)
\ge I_r(A; C_{\le k} \mid Q) - G_k(\delta).
\]
Combining this inequality with SIA we have
\[
I_{p_\theta}(A; C_{\le k} \mid Q)
\ge \varepsilon_k - G_k(\delta).
\]
If $\delta$ is chosen such that $G_k(\delta) < \varepsilon_k/2$, then
\[
I_{p_\theta}(A; C_{\le k} \mid Q)
\ge \varepsilon_k - G_k(\delta)
\ge \frac{\varepsilon_k}{2}.
\]
This proves the approximate SIA inequality for the model at step $k$.
\end{proof}